\def\eqref#1{equation~\ref{#1}}
\def\1{\bm{1}}
\def\eps{{\epsilon}}
\DeclareMathAlphabet{\mathsfit}{\encodingdefault}{\sfdefault}{m}{sl}
\SetMathAlphabet{\mathsfit}{bold}{\encodingdefault}{\sfdefault}{bx}{n}
\newcommand\blfootnote[1]{%
  \begingroup
  \renewcommand\thefootnote{}\footnote{#1}%
  \addtocounter{footnote}{-1}%
  \endgroup
}
\title{LLP-Bench: A Large Scale Tabular Benchmark for \\Learning from Label Proportions}
\author{%
Anand Brahmbhatt* \\ Google Research India \\  {\tt anandpareshb@google.com}
\and
Mohith Pokala* \\ Google Research India \\ {\tt mohithpokala@google.com}
\and
Rishi Saket \\ Google Research India \\ {\tt rishisaket@google.com}
\and
Aravindan Raghuveer \\ Google Research India \\ {\tt araghuveer@google.com}
}
\renewcommand{\R}{\mathbb{R}}
\newcommand{\mc}[1]{\ensuremath{\mathcal{#1}}\xspace}
\newcommand{\mb}[1]{\ensuremath{\mathbf{#1}}\xspace}
\newcommand{\tn}[1]{\ensuremath{\textnormal{#1}}\xspace}
\newtheorem{theorem}{Theorem}[section]
\newtheorem{lemma}[theorem]{Lemma}
\newtheorem{definition}[theorem]{Definition}
\newtheorem{corollary}[theorem]{Corollary}
\begin{document}
\maketitle
\blfootnote{* -- equal contribution}
\begin{abstract}

In the task of Learning from Label Proportions (LLP), a model is trained on groups (a.k.a bags) of instances and their corresponding label proportions to predict labels for individual instances. LLP has been applied pre-dominantly on two types of datasets - image and tabular. In image LLP, bags of fixed size are created by randomly sampling instances from an underlying dataset. Bags created via this methodology are called {\em random bags}. Experimentation on Image LLP has been mostly on random bags on CIFAR-* and MNIST datasets.
Despite being a very crucial task in privacy sensitive applications, tabular LLP does not yet have a open, large scale LLP benchmark.  One of the unique properties of tabular LLP is the ability to create {\em feature bags} where all the instances in a bag have the same value for a given feature. It has been shown in prior research that feature bags are very common in practical, real world applications~\cite{chen2023learning,SRR}. 

In this paper, we address the lack of a open, large scale tabular benchmark. 
First we propose LLP-Bench, a suite of 70 LLP datasets (62 feature bag and 8 random bag datasets) created from the Criteo CTR prediction and the Criteo Sponsored Search Conversion Logs datasets, the former a classification and the latter a regression dataset. These LLP datasets represent diverse ways in which bags can be constructed from underlying tabular data. To the best of our knowledge, LLP-Bench is the first large scale tabular LLP benchmark with an extensive diversity in constituent datasets. Second, we propose four metrics that characterize and quantify the hardness of a LLP dataset. Using these four metrics we present deep analysis of the 62 feature bag datasets in LLP-Bench. Finally we present the performance of 9 SOTA and popular tabular LLP techniques on all the 62 datasets.

\end{abstract}

\section{Introduction}
In traditional supervised learning, \emph{training} data consists of feature-vectors (instances) along with their labels.  A  model trained using such data is then used during inference to predict the labels of \emph{test} instances. 
In recent times, primarily due to privacy concerns and relative rarity of high quality large-scale supervised data, the weakly supervised framework of \emph{learning from label proportions} (LLP) has gained importance~\cite{MutCon,SRR,Ruping10,WIBB} In LLP, the training data is aggregated into \textit{bags}. Each bag contains a bunch of instances (and their feature vectors) and their corresponding aggregated label count.  
The goal is to learn a classification model for predicting the class-labels of individual instances~\cite{FreitasK05,Musicant}.

 Study of LLP has recently gained importance due to developments in the privacy landscape. In particular, restrictions on tracking of user events have led to an LLP formulation of user-modeling in \emph{online advertising}~\cite{DBLP:journals/corr/abs-2201-12666}. Since only the average label for a bag of users is revealed, the size of the bags is a measure of the privacy afforded. Other applications include medical records anonymization~\cite{WIBB},  IVF prediction~\cite{hernandez2018}, image classification~\cite{Bortsova18, Orting16}, mass spectrometry~\cite{chen2004cost}, datasets  with legal constraints~\cite{Ruping10,WIBB} and inadequate or costly supervision~\cite{DNRS,chen2004cost}.

Such LLP techniques have primarily been evaluated and studied on 
image \cite{liu2019learning, zhang2022learning, tsai2020learning, liu2022self, dulac2019deep, bortsova2018deep}  and tabular \cite{SRR, easy-llp, QuadriantoSCL09} datasets. On images, well known datasets like CIFAR-10, CIFAR-100, MNIST are used -- typically by randomly partitioning the dataset into bags -- to create medium-large scale LLP datasets. %
On the other hand, tabular data consists of independent rows of feature vectors with one more labels attached to each feature vector. Often, previous works used tabular LLP datasets derived from small UCI ~\cite{Dua:2019} datasets which  fail to simulate the diversity and scale of applications involving such data. 
Notably, tabular datasets are extremely common in real world classification and regression tasks for online advertising \cite{DBLP:journals/corr/abs-2201-12666}, health care research ~\cite{Ruping10,WIBB} and scientific simulation studies \cite{chen2004cost}. Such applications tend to use very large scale data either from online user interaction~\cite{mcmahan2013ad,he2014practical} or user studies~\cite{florencio2007large}. Impact of privacy leaks due to inadvertent exposure of sensitive data is much higher in large scale datasets.  Therefore LLP on large scale tabular datasets is a very critical application that is receiving increasing attention from the research community.  While image LLP has large scale benchmark datasets derived from CIFAR-*, an equivalent benchmark  does not exist for tabular data.

\begin{figure}[t!]
\centering
\vspace{-7mm}
\includegraphics[scale=0.17] {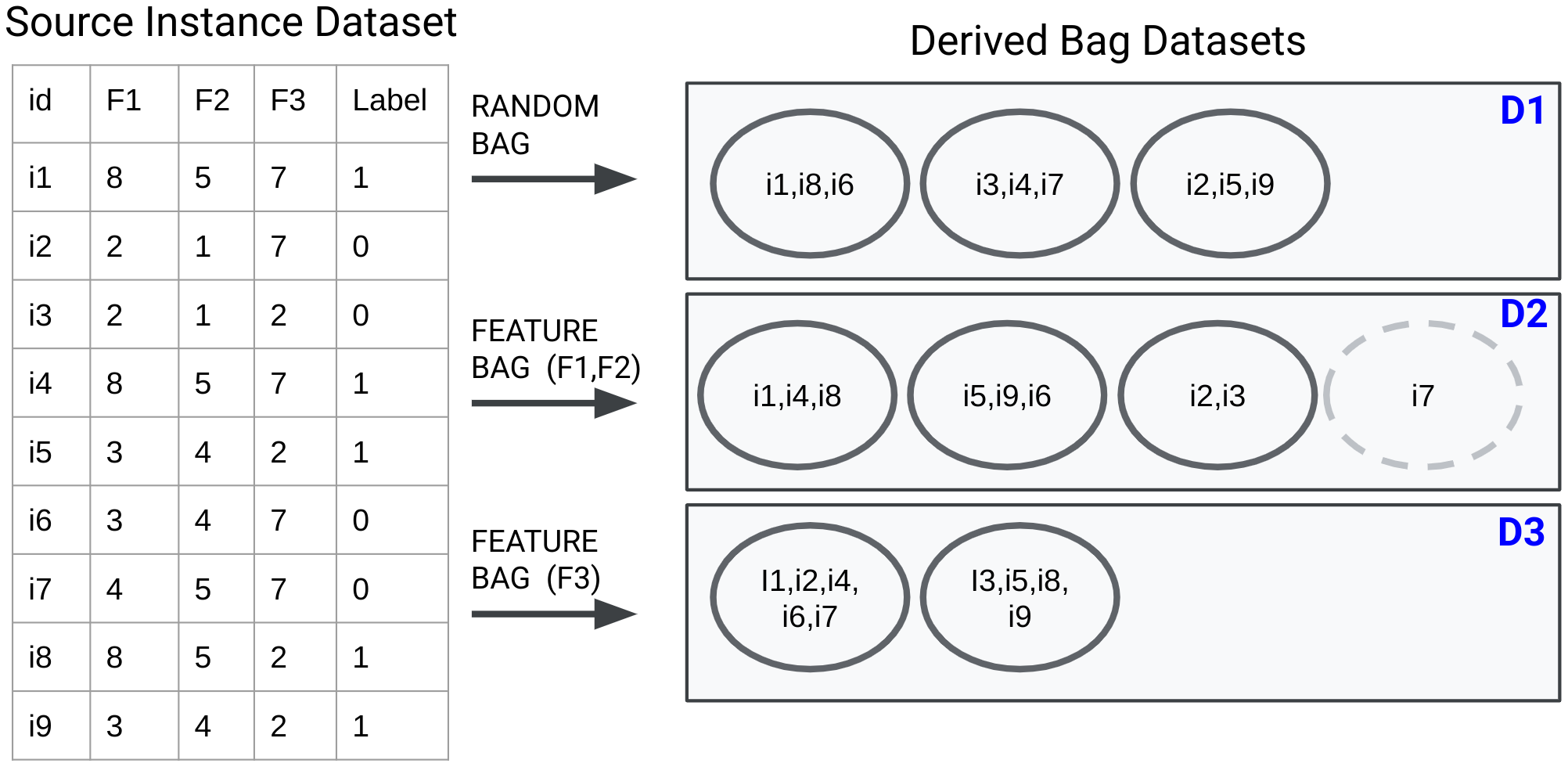}%
\caption{Dataset D1 is formed by randomly choosing without replacement from the instance dataset to form bags of size =3. Dataset D2 is feature bag formed by creating bags such that within a bag instances have the same value for features F1, F2. The fourth bag is removed because it has only one instance i7. Dataset D3 is similarly formed by using feature F3 as the grouping key. Notice the bags have become substantially larger than those in D1.}
\label{fig:dataset_schematic}
\end{figure} 

A unique property of tabular LLP as pointed out in  recent literature \cite{SRR,chen2023learning} is the notion of feature bags. In feature bags, bags are constructed such that all instances within the bag have the same value for given key(s) called the grouping key(s). Such bags occur in critical real-world applications such as user modelling in online advertising where the conversion labels are aggregated over pre-selected categorical features \cite{chen2023learning, easy-llp, DBLP:journals/corr/abs-2201-12666}. Figure~\ref{fig:dataset_schematic} shows three datasets created from the same instance datasets.   The first dataset is created like random sampling much like the ones created using CIFAR-*~\cite{liu2019learning, zhang2022learning, tsai2020learning}. The second and third datasets are features bags created by  using F1, F2  and F3 as the grouping keys respectively. Note that feature bags can also be made to have fixed size. 

Motivated by the above observations and the richness of the problem, we propose LLPBench a large scale, diverse benchmark for tabular LLP. We make three contributions in this paper as listed below.

\noindent
{\bf 1.} We propose LLP-Bench, a suite of 
\begin{itemize}
    \item 56 LLP datasets (52 feature bag and 4 random bag datasets) created from the popular Criteo CTR prediction dataset consisting of 45 million instances~\cite{Criteo:2014}.
    \item 14 LLP datasets (10 feature bag and 4 random bag datasets) created from the more recently available Criteo Sponsored Search Conversion Logs (Criteo SSCL) containing **TBA** million instances~\cite{tallis2018reacting}. 
\end{itemize}
These LLP datasets represent diverse ways in which bags can be constructed from underlying tabular data. Those feature bag datasets entail removing bags of extreme sizes, and contain 13.5 million to 24.75 million bags (**this should be instances, change numbers also**).  To the best of our knowledge, LLP-Bench is the first large scale tabular LLP benchmark with an extensive diversity in constituent datasets (Section~\ref{sec:dataset_creation}).

\noindent
{\bf 2.} We propose four metrics that help  characterize and quantify the hardness of an LLP dataset. Using these four` metrics we present deep analysis of the 56 datasets in LLP-Bench (Sections~\ref{sec:prelims},~\ref{sec:benchmark-diversity}).

\noindent
{\bf 3.} We present the performance of 9 SOTA and popular tabular LLP techniques on all the 56 datasets of LLP-Bench derived from the Criteo CTR prediction dataset. Some of these techniques are not applicable to regression datasets, so we evaluate 3 of them on the 14 datasets of LLP-Bench corresponding to Criteo Sponsored Search Conversion Logs.  To the best of our knowledge, our study consisting of more than 3000 experiments is the most extensive evaluation of popular tabular LLP techniques  in literature (Sections~\ref{sec:llpbench-performance},~\ref{sec:analysis}).

\medskip
{\bf Choice of base datasets.} The Criteo CTR and SSCL datasets are relevant to our work - firstly they are impression-click and click-conversion datasets which correspond to natural applications where LLP aggregation can occur due to user privacy~\cite{DBLP:journals/corr/abs-2201-12666}. Another reason is that they are among the few publicly available large scale tabular dataset with several categorical features, which allow for a rich collection of feature bag datasets. Our techniques for feature-based grouping and  LLP dataset analysis are more generally applicable. However, since we focus on a tabular LLP Benchmark, we restrict ourselves to these datasets in which either the feature-vectors represent impressions and are given binary $\{0,1\}$-labels indicating a click, or they represent clicks and the label indicates the number of conversions for that click. 

The binary label setting is widely studied in supervised machine learning, and in the LLP setting as well with many real-world applications: see for e.g. references \cite{hernandez2018} and \cite{DNRS} in the paper for applications in IVF prediction and high-energy physics. There are, on the other hand, regression applications in remote sensing~\cite{WL07,WRHOV08}  in which the real-valued labels are available only as aggregates.  In particular, the important task of user-modeling on online advertising platforms has recently seen privacy related restrictions leading to an LLP formulation for it (see Section 1 of \cite{easy-llp}), which is typically a classification or a regression problem.

\section{Related Work}\label{sec:related_work}

Several techniques for LLP have been studied over the years. The work of \cite{FreitasK05,HG} applied trained probabilistic models using Monte-Carlo methods. Subsequent works~\cite{Musicant,Ruping10} extended supervised learning techniques such neural nets, SVM and $k$-\emph{nearest neighbors} to LLP, others adapted clustering based approaches~\cite{ChenLQZ09,StolpeM11}, while \cite{YuLKJC13} proposed a novel $\propto$-SVM method for LLP. The work of \cite{QuadriantoSCL09} estimated model parameters from label proportions for the exponential generative model with certain assumptions on label distributions of bags. Their method was further applied by \cite{PatriniNCR14} for more general models and relaxed distributional assumptions. More recent works have investigated deep neural network based LLP methods~\cite{Bortsova18,ArdehalyC17,LiuWQTS19}, techniques using bag combinations~\cite{MutCon,SRR}, curated bags~\cite{CFKM23} and training on derived \emph{surrogate} labels for instances~\cite{easy-llp}. Recently, \cite{Saket21,Saket22} initiated a theoretical study of LLP from the computational learning perspective.

All of the previous works in LLP experimentally evaluate their methods on LLP datasets consisting of bags randomly created from some real-world supervised learning dataset. 
In these \emph{pseudo-synthetic}  LLP datasets, instances are randomly sampled/partitioned into the different bags, where in \cite{PatriniNCR14} and \cite{SRR} this process also clusters feature-vectors to generate more complicated bag distributions.
Almost all of the above works use limited scale data, typically small to medium scale UCI datasets~\cite{YuLKJC13,PatriniNCR14,MutCon}, image datasets~\cite{LiuWQTS19}, social media data~\cite{ArdehalyC17}. In general, there have been very few  large scale tabular datasets created and used for LLP. To the  best of our knowledge, \cite{SRR} and \cite{CFKM23} are the only works that explore a large dataset (Criteo) to test their methodology. Since the primary contribution  of these works is algorithmic, they do not justify their choice of how bags were created nor do they explore  the many  choices and tradeoffs of creating bags from a large scale instance dataset.  In our work we precisely address this gap  - we not only create benchmark with 56 diverse datasets, but we study in detail the tradeoffs involved and analyse the performance of 9 important baselines.    We study the performance of the following 9 baselies on the LLP-Bench benchmark.

${\bf DLLP}$: This a the standard LLP baseline method use in previous works~\cite{ArdehalyC17}, which optimizes a bag-level loss between the average label proportion and the predicted average label proportion. We evaluate ${\sf DLLP\text{-}BCE}$ and ${\sf DLLP\text{-}MSE}$ which use the bag-level BCE and MSE losses respectively using the minibatch training described above.\\ %
${\bf GenBags}$: This is the generalized bags method of \cite{SRR}, and since we only have one collection of bags in a given experiment we use random Gaussian combining weights to construct 120 different generalized bags per mini-batch of 8 bags. \\
${\bf Easy\text{-}LLP}$: In this technique proposed by \cite{easy-llp}, mean-bag labels along with the global average label are used to define a \emph{surrogate} label for training instances over which the model is optimized using the BCE loss.\\
${\bf OT}$ methods: There are the optimal transport based techniques proposed in \cite{OT,DZCBV19}. The first variant ${\bf OT\text{-}LLP}$ included is the non-regularized optimal transport for disjoint bags which can be implemented via a greedy approach. We also have ${\sf Hard\text{-}EROT\text{-}LLP}$ and ${\sf Soft\text{-}EROT\text{-}LLP}$ -- the hard and soft entropic regularized OT methods of \cite{OT}.  \\
${\bf SIM\text{-}LLP}$: In this method proposed by \cite{KotziasDFS15}, the bag-level DLLP loss is augmented with a pairwise similarity based loss penalizing different predictions of geometrically close feature-vectors. Since the similarity based loss has number of terms which is square in the number of feature-vectors in a minibatch, we sample a random set of $400$ feature-vectors from each minbatch to apply this loss. \\
${\bf Mean\text{-}Map}$: This is the well-known technique of \cite{QuadriantoSCL09} for linearized exponential generative models, consisting of two steps: computing the quantity using the bag-label proportions followed by optimizing for the model parameters over all feature-vectors. 
While the first step is a straightforward calculation, the second step is implemented using a minibatch optimization.

\medskip
While all of the above techniques are applicable to the binary-label setting and hence are evaluated on the LLP datasets derived from Criteo CTR, only a subset of them, specifically ${\sf DLLP\text{-}MSE}$, ${\bf GenBags}$ and ${\bf SIM\text{-}LLP}$, are applicable to real-valued labels and are included as baselines on the LLP datasets corresponding to Criteo SSCL.

\section{LLP Dataset Characteristics} \label{sec:prelims}

In this section, we first establish some notation, define LLP terminology and then propose four metrics to characterize and quantify the hardnesss of an LLP dataset. 

In our exploration of LLP we shall only consider non-negative real-valued instance labels.\\
\textbf{Notation}: $X := \{\bx^{(i)} \in \R^n\}_{i=1}^{m}$ is a dataset of $m$ feature vectors in $n$-dimensional space with labels given by $Y := \{y^{(i)} \in \{0, 1\}\}_{i=1}^{m}$. We denote by $\hat{Y} := \{\hat{y}^{(i)} \in \R_{\geq 0}\}_{i=1}^{m}$ the corresponding model predictions which are probabilities of the predicted label being $1$.
A \emph{bag} $B \subseteq [m]$ consists of feature vectors $X_B := \cup_{i \in B}\bx^{(i)}$ and with the corresponding label sum $y_B := \sum_{i \in B}y^{(i)}$. The label proportion of the bag is $y_B/|B|$. 
\begin{definition}[LLP  Dataset]
A \emph{learning from label proportions (LLP)} dataset corresponding to a collection of bags $\mathcal{B} := \{B_j\}_{j=1}^{N}$ is given by $\{(X_B, y_B)\,\mid\,B\in \mc{B}\}$. The \emph{label bias} of training dataset is $\mu(\mathcal{B}, Y) := \left(\sum_{B \in \mathcal{B}}y_B\right)/\left(\sum_{B \in \mathcal{B}}|B|\right)$, while the average label proportion is $\hat{\mu}(\mathcal{B}, Y) := \tfrac{1}{N}\left(\sum_{B \in \mathcal{B}}y_B/|B|\right)$. 
\end{definition}
The LLP datasets considered in the paper have disjoint bags. 

In the following we define statistics
comparing the separation among feature-vectors within bags and their separation across bags. 
using a natural notion of bag separation.
\begin{definition}[Bag Separation]\label{def:bagsep}
For a distance $d$ on $\R^n$ and collection of bags $\mc{B} = \{B_1, \dots, B_M\}$ the corresponding separation function is defined as ${\sf BagSep}(B, B', d) := \frac{1}{|B||B'|}\sum_{\bx \in B}\sum_{\bx' \in B'}d(\bx, \bx')$. We define the $M\times M$ matrix ${\sf BagSepMatrix}(\mc{B}, d)$ whose $(i,j)$th element is given by ${\sf BagSep}(B_i, B_j, d)$.
\end{definition}
We use ${\sf BagSep}$ to compute the average separation between pairs of bags and the average separation within each bag. If the feature-vectors in bags are clustered together and far away from those of other bags, we expect the former to be significantly greater than the later.
\begin{definition}[Inter-Bag Separation for a bag]
Given $\mathcal{B}$, and metric $d$ on $\R^n$, the average inter-bag distance for a bag $B \in \mathcal{B}$ is defined as ${\sf InterBagSep}(B, d) := \frac{1}{|\mathcal{B}|-1}\sum_{B' \in \mathcal{B}, B' \neq B}{\sf BagSep}(B, B', d)$. 
\end{definition}
For computing the average statistic for the entire dataset we define the following.
\begin{definition}
The mean intra-bag separation of $\mathcal{B}$ is defined as ${\sf MeanIntraBagSep}(\mathcal{B}, d) := \frac{1}{|\mathcal{B}|}\sum_{B \in \mathcal{B}}{\sf BagSep}(B, B, d)$. The mean of average inter-bag separation is defined as ${\sf MeanInterBagSep}(\mathcal{B}, d) :=  \frac{1}{|\mathcal{B}|}\sum_{B \in \mathcal{B}}{\sf InterBagSep}(B,  d)$.
\end{definition}

\subsection{Hardness metrics for LLP datasets}\label{sec:hardness_metrics}
We are now ready to present four metrics that characterize the hardness of a LLP dataset: (i) standard deviation of label proportion (ii) inter vs intra bag separation ratio (iii) mean bag size, and (iv) cumulative bag size distribution. 

\medskip
${\bf  LabelPropStdev}$: This  is the standard deviation of the label proportions of the collection of bags i.e., $\sqrt{\tn{Var}_{B\leftarrow \mc{B}}\left[y_B/|B|\right]}$. A higher ${\sf LabelPropStdev}$  typically provides more model training supervision. For e.g. consider $(X, Y)$ with two different bag collections $\mc{B}_1$ and $\mc{B}_2$ with $\alpha = \hat{\mu}(\mathcal{B}_1, Y) = \hat{\mu}(\mathcal{B}_1, Y)$  while $0 \approx \beta_1 = {\sf LabelPropStdev}(\mc{B}_1, Y) \ll {\sf LabelPropStdev}(\mc{B}_2, Y) =: \beta_2$. Consider a model that predicts $\alpha$ for every feature-vector. Since $\beta_1 \approx 0$, the predicted label proportions of this model are will be close to the true label proportions for most bags in $\mc{B}_1$ unlike in $\mc{B}_2$ (since $\beta_2 \gg \beta_2$). For many bag-level losses, this results in such a model being much closer to optimal for $\mc{B}_1$ rather $\mc{B}_2$. On the other hand, the model is only learning the average label proportion and not discriminating among the instances, therefore not desirable and is ruled out when ${\sf LabelPropStdev}$ is higher.

\medskip
${\bf InterIntraRatio}$: This denotes  ${\sf MeanInterBagSep}(\mc{B}, d)/{\sf MeanIntraBagSep}(\mc{B}, d)$ when $d = \ell_2^2$.
A dataset with large ${\sf InterIntraRatio}$ has well separated bags and therefore the label proportion supervision provided per bag carries more information and hence easier to learn from compared to a dataset with a smaller  ${\sf InterIntraRatio}$. 

\medskip
${\bf MeanBagSize}$: Since we have only have a label proportion for each bag, informally speaking, the larger the bag size the lower the amount of label supervision for that bag. The third and a simple metric to characterize this is ${\sf MeanBagSize}$ i.e., the mean size of all the bags in the dataset. Therefore, a dataset with larger mean bag size is a much harder dataset to learn from compared to one with a much smaller mean bag size.

\medskip
${\bf CumuBagSizeDist}$: 
The bag sizes for any dataset are characterized by their cumulative distribution function which plots the fraction of bags of size at most $t$ for all $t \geq 1$. 
We compute the bag sizes at the $50$, $70$, $85$ and $95$ percentile of cumulative distribution plot, for each dataset.
Short-tailed distributions have most bags of small size and a very few large sized bags whereas Long-tailed distributions contain many bags of large sizes. Bags of large sizes provide a very little label information for a lot of feature level information. Therefore a LLP dataset with a long-tailed distribution of bag sizes is a much harder dataset to learn from than a short-tailed one.

\section{LLP Dataset: Bag creation}
\label{sec:dataset_creation}

{\bf Criteo CTR.} This has 13 numerical and 26 categorical features and a binary label. Each of the approximately 45 million rows (instances) represents an impression (online ad) and the label indicates a click. The semantics of all the features are undisclosed and the values of all the categorical features hashed into 32-bits for anonymization. Additionally, the dataset has missing values. We use a preprocessed version of the dataset as done for the AutoInt~(\cite{autoint_cikm_2019}) model, described and implemented in their provided code\footnote{   {\scriptsize \tt https://github.com/DeepGraphLearning/RecommenderSystems/tree/master/featureRec} .} We choose AutoInt because that is among the best performing models on the Criteo benchmark\footnote{ {\scriptsize \tt https://paperswithcode.com/sota/click-through-rate-prediction-on-criteo}}.  For convenience we label the numerical and categorical features (in their order of occurrence) as $\tn{N}1, \dots, \tn{N}13$ and $\tn{C}1, \dots \tn{C}26$. The preprocessing applies $\tn{int}(\log^2(x))$ transformation when $x > 2$ on the numerical feature values $x$, and we further additively scale so that their values are non-negative integers. The categorical features are encoded as non-negative integers.

{\bf Criteo SSCL.} This has 3 numerical features ($\tilde{\tn{N}}1, \tilde{\tn{N}}2, \tilde{\tn{N}}3$) and 17 categorical features ($\tilde{\tn{C}}1,\dots, \tilde{\tn{C}}17$) and $1.7$ million instances in total. Unlike Criteo CTR, the columns in Criteo SSCL are labeled, and we provide the mapping in Appendix \ref{app:criteo_sscl_column_encoding}. The missing values of the dataset are handled by (i) removing the datapoints in which the target is missing, (ii) group all the infrequent categorical values (occurring at most 5 times) along with the missing value (which is -1) into one category for each categorical feature, and (iii) replace all missing values by the mean of that feature for each numerical column. Since this is a regression dataset we do not transform the numerical features, while the categorical features are encoded as non-negative integers. The label is a non-negative real value.

For both Criteo CTR and SSCL, we create two types of LLP Data sets. We create 4 {\em Random Bag} datasets by randomly sampling without replacement (Similar to D1 in Figure~\ref{fig:dataset_schematic}) to create bags of fixed sizes of 64, 128, 256, 512. Let $\mc{U}$ denote the categorical columns $\{\tn{C}1, \dots, \tn{C}26\}$ in the case of Criteo CTR and $\{\tilde{\tn{C}}1,\dots, \tilde{\tn{C}}17\}$ for Criteo SSCL.
Next, we create {\em Feature Bag} datasets by 
 grouping  instances by subsets $\mc{C} \subseteq \mc{U}$, of the categorical columns, where $\mc{C} \leq 2$. The feature grouping subset used for a dataset is called its grouping key.  For each setting of the values of $\mc{C}$ we obtain a bag with instances with those values of $\mc{C}$  (Similar to D2 and D3 in Figure~\ref{fig:dataset_schematic}).    Each such  key grouping yields an LLP dataset\footnote{ {\scriptsize Note that for model training purposes such bags may be created from only the \emph{train set} portion of the entire dataset}}.
Thus, we obtain ${26 \choose 2} + 26 = 351$ LLP datasets from Criteo CTR and ${17 \choose 2} + 17 = 153$ LLP datasets corresponding to Criteo SSCL, each referred to also as a \textit{dataset} on $\mc{C}$  ($|\mc{C}| \leq 2$). Note that for any dataset, the set of bags partition the dataset and therefore each instance occurs in exactly one bag. 

Next we describe two filtering strategies to remove datasets that are ineffective in practice. The first strategy removes very small or very large bags within a dataset. The second strategy drops a dataset entirely if the first filtering method results in pruning a large portion of the underlying dataset.

\subsection{Bag Filtering} \label{sec:bag_filtering}
The feature bag creation step leads to bags of varying sizes. Very small bags are not practical because they do not preserve enough privacy.
For instance, datasets on $\{\tn{C}10, \tn{C}16\}$ and $\{\tn{C}4, \tn{C}10\}$ from Criteo CTR each contain more than $8\times 10^6$ bags.
We introduce a hyper-parameter $low_{thresh}$ which represents the size of the smallest bag   that can be present in a LLP-Bench dataset.

Similarly, a very large bag is almost useless since the information lost via aggregation is significant and hence the dataset cannot be used to build a useful classifier.
For  instance, the initial dataset on $\tn{C}9$ creates only $3$ bags and the dataset on $\tn{C}20$ creates only $4$ bags. Similarly, for the Criteo SSCL Dataset, the initial dataset on $\tilde{\tn{C}}1$ created only $11$ bags and the dataset on $\tilde{\tn{C}}2$ creates only $4$ bags.
We introduce a hyper-parameter $high_{thresh}$. which represents the size of the largest bag  that can be present in a LLP-Bench dataset.

For our experiments we set $low_{thresh}$ to be 50 and $high_{thresh}$ to be 2500.

\subsection{Dataset Filtering}\label{sec:dataset_filtering}
\label{filter_sec}

If the $low_{thresh}$ and $high_{thresh}$ based filtering remove  a significant fraction of bags then most of the underlying instances will be lost. This will cause the LLP dataset's performance to be poor because we are not left with enough signal to train on. We hence drop datasets that have less than $instance_{thresh}\%$ of the original instance data size. 
We set $instance_{thresh}$ to be $30\%$ in our experiments.  
After applying this filter, we are left with $52$ LLP datasets from Criteo CTR and $10$ from Criteo SSCL. All the datasets in single columns are filtered out as the maximum percentage of instances any of these datasets retains is $21.68\%$ ($\tn{C}4$) for datasets formed using Criteo CTR dataset and $28.84\%$ ($\tilde{\tn{C}}5$) for datasets formed using Criteo SSCL dataset. This analysis is presented in detail in Appendix \ref{appendix:bag_stats}.
For notational convenience in the rest of this paper, we shall call $(A, B)$ where $A, B \in \mc{U}$ the LLP dataset formed via grouping by the subset $\mc{C} = \{A, B\}$.

In Appendix~\ref{appendix:feature_random_train} we demonstrate the creation and performance metrics of \emph{fixed size feature bag} datasets as well -- in which  for each of the 52 groupings from Criteo CTR and bag size $q \in \{64, 128, 256, 512\}$, the train instances are
 ordered according to the grouping features and consecutive $q$-sized sequences are made into bags.

\section{Diversity of the Benchmark} \label{sec:benchmark-diversity}

Figures \ref{fig:datasets_vs_bag_metrics} and  \ref{fig:datasets_performance_vs_bag_metrics_criteo_ssl} depict for each of the 52 feature bag LLP datsets from Crireo CTR and 10 from Criteo SSCL respectively chosen in Sec. \ref{sec:dataset_filtering}, the values of three different bag-level metrics: (i) ${\sf MeanBagSize}$ - the average size of bags, (ii) ${\sf LabelPropStdev}$ - the standard deviation of the bag label proportions, and (ii) ${\sf InterIntraRatio}$ as given defined in Sec. \ref{sec:hardness_metrics}. Apart from capturing the bag size and label proportion distribution, the third metric also quantifies the geometric distribution of the feature-vectors w.r.t the bags, in particular how clustered the feature-vectors in an average bag are.

{\bf Criteo CTR.} We see in Figure \ref{fig:mean-bag-size} that  ${\sf MeanBagSize}$ values range from nearly 500 to around 200. While most of them are in $[150,250]$, around one-fourths of the values are above $300$, indicating significant diversity in the values of this metric. 
In Figure \ref{fig:label-prop-stdev} we see a similar trend in ${\sf LabelPropStdev}$ -- while most of the values are in the range $[0.15,0.18]$, around 25\% of the them are below $0.14$. This unsurprising since larger bags would typically lead to more concentrated label proportions, and thus ${\sf MeanBagSize}$  is loosely anti-correlated to ${\sf LabelPropStdev}$ and our collection of datasets have similar diversity of the latter's values.
Figure \ref{fig:inter-intra-mean_ratio} has the values for ${\sf InterIntraRatio}$ showing that they are well spread across the range $[1.1, 1.6]$. While most values are below $1.4$, there are around 17\% of them which are above $1.5$, indicating that the distribution has a fat tail and metric values are diverse.

{\bf Criteo SSCL.} Similar trends are also observed from Figure \ref{fig:datasets_performance_vs_bag_metrics_criteo_ssl}, albeit on a smaller number of points. In Figure \ref{fig:mean-bag-size-criteo-ssl} ${\sf MeanBagSize}$ ranges from 200 to around 325, with values evenly distributed in the approximate range 200-240 except two values exceeding 280 indicating the diversity of this metric. The values of ${\sf LabelPropStdev}$ in Figure  \ref{fig:label-prop-stdev-criteo-ssl} has a diverse distribution in the range around 12.5 to 14 barring one outlier value near 11.5 corresponding to a dataset which has the minimum ${\sf MeanBagSize}$. Figure \ref{fig:inter-intra-mean_ratio-criteo-ssl} also indicates a fairly uniform spread of ${\sf InterIntraRatio}$ values in its range around 1.18 to around 1.26. 

\begin{figure}[htb]
    \centering
    \begin{minipage}{.48\linewidth}
    \begin{subfigure}[b]{\linewidth}
        \includegraphics[width=\linewidth]{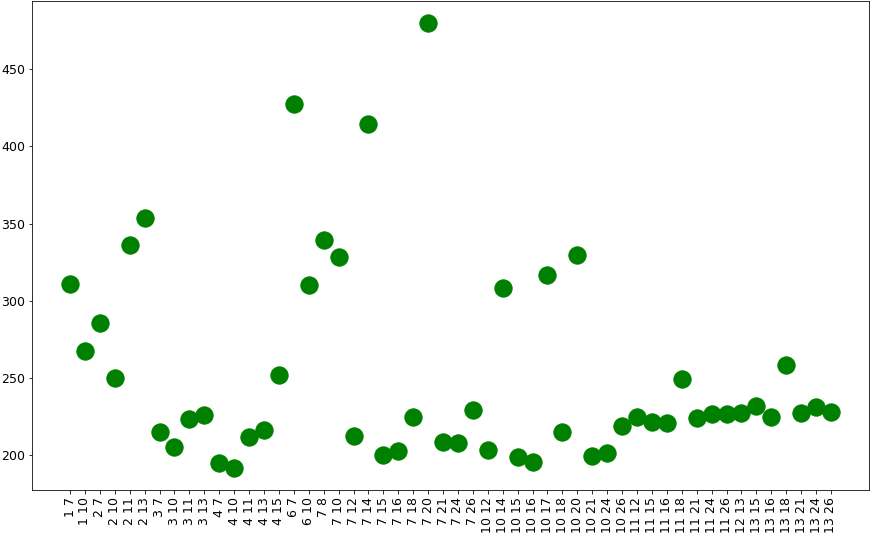}
        \caption{${\sf MeanBagSize}$}
        \label{fig:mean-bag-size}
    \end{subfigure}
    \begin{subfigure}[b]{\linewidth}
        \includegraphics[width=\linewidth]{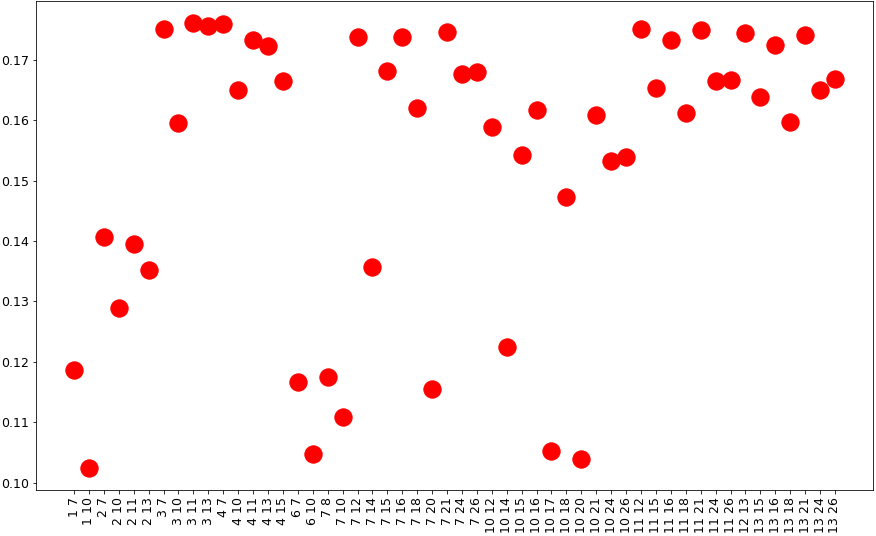}
        \caption{${\sf LabelPropStdev}$}
        \label{fig:label-prop-stdev}
    \end{subfigure}
    \begin{subfigure}[b]{\linewidth}
        \includegraphics[width=\linewidth]{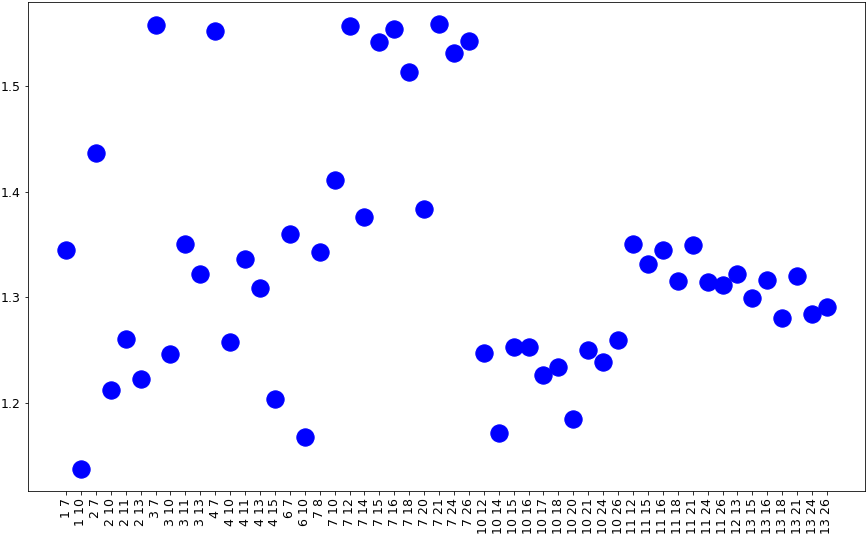}
        \caption{${\sf InterIntraRatio}$}
        \label{fig:inter-intra-mean_ratio}
    \end{subfigure}
    \caption{Datasets vs. bag-level metrics: $y$-axis has the metric, $x$-axis has the datsets.
    }
    \label{fig:datasets_vs_bag_metrics}
    \end{minipage}\quad
    \begin{minipage}{.48\linewidth}
    \begin{subfigure}[b]{\linewidth}
        \includegraphics[width=\linewidth]{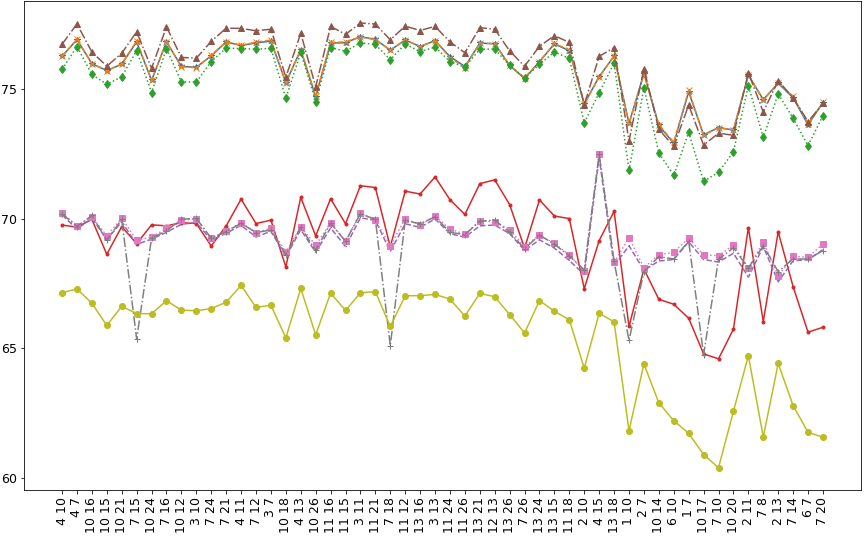}
        \caption{${\sf MeanBagSize}$}
        \label{fig:mean-bag-size-2}
    \end{subfigure}
    \begin{subfigure}[b]{\linewidth}
        \includegraphics[width=\linewidth]{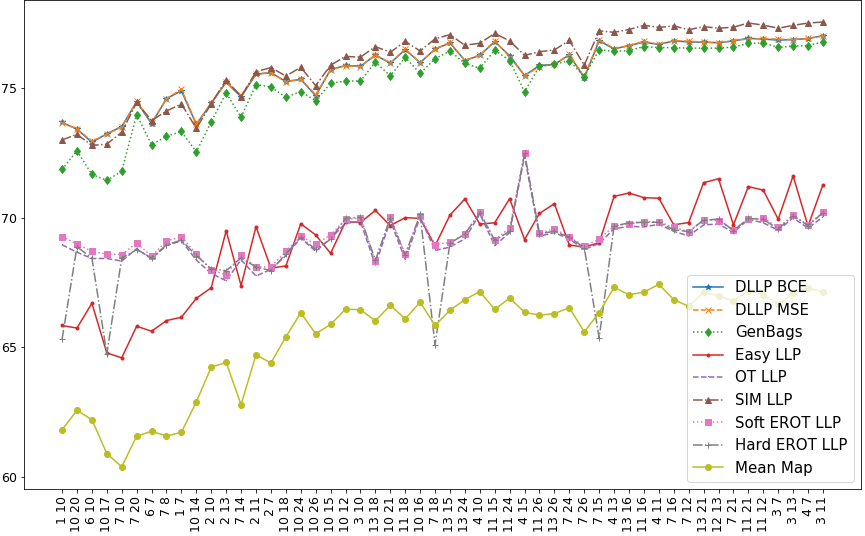}
        \caption{${\sf LabelPropStdev}$}
        \label{fig:label-prop-stdev-2}
    \end{subfigure}
    \begin{subfigure}[b]{\linewidth}
        \includegraphics[width=\linewidth]{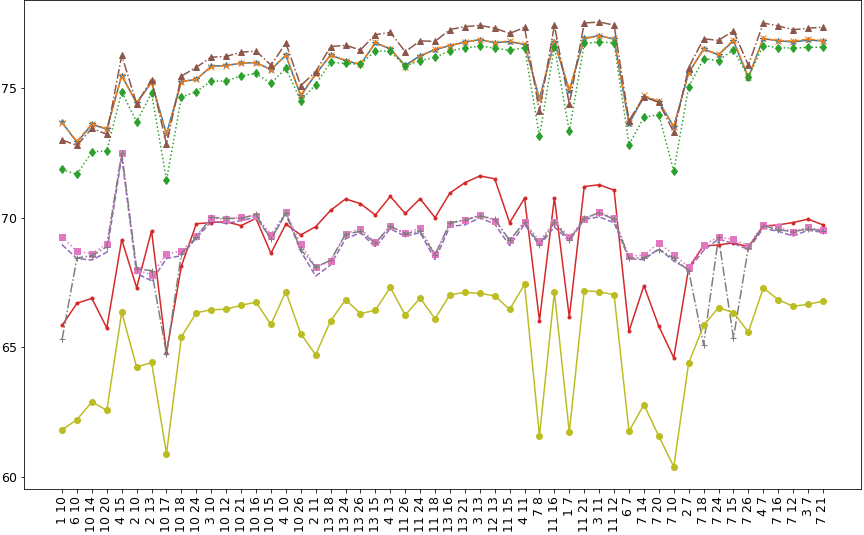}
        \caption{${\sf InterIntraRatio}$}
        \label{fig:inter-intra-mean_ratio-2}
    \end{subfigure}
    \caption{Datasets performance: AUC scores on the y-axis, $x$-axis has the datasets ordered according to increasing metric.
    }
    \label{fig:datasets_performance_vs_bag_metrics}
    \end{minipage}
\end{figure}

\begin{figure}[htb]
    \centering
    \begin{minipage}{.48\linewidth}
    \begin{subfigure}[b]{\linewidth}
        \includegraphics[width=\linewidth]{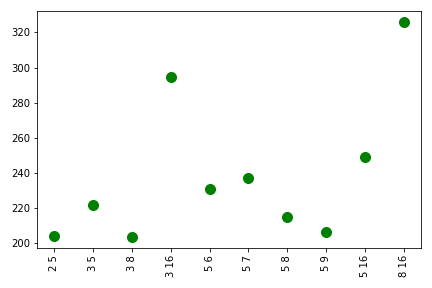}
        \caption{${\sf MeanBagSize}$}
        \label{fig:mean-bag-size-criteo-ssl}
    \end{subfigure}
    \begin{subfigure}[b]{\linewidth}
        \includegraphics[width=\linewidth]{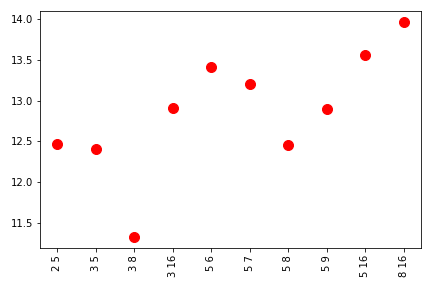}
        \caption{${\sf LabelPropStdev}$}
        \label{fig:label-prop-stdev-criteo-ssl}
    \end{subfigure}
    \begin{subfigure}[b]{\linewidth}
        \includegraphics[width=\linewidth]{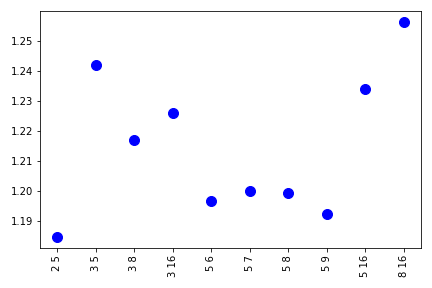}
        \caption{${\sf InterIntraRatio}$}
        \label{fig:inter-intra-mean_ratio-criteo-ssl}
    \end{subfigure}
    \caption{Criteo SSL Datasets vs. bag-level metrics: $y$-axis has the metric, $x$-axis has the datsets.
    }
    \label{fig:datasets_vs_bag_metrics_criteo_ssl}
    \end{minipage}\quad
    \begin{minipage}{.48\linewidth}
    \begin{subfigure}[b]{\linewidth}
        \includegraphics[width=\linewidth]{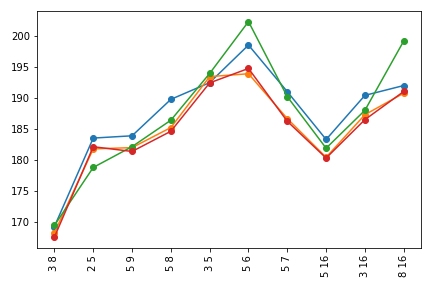}
        \caption{${\sf MeanBagSize}$}
        \label{fig:mean-bag-size-2-criteo-ssl}
    \end{subfigure}
    \begin{subfigure}[b]{\linewidth}
        \includegraphics[width=\linewidth]{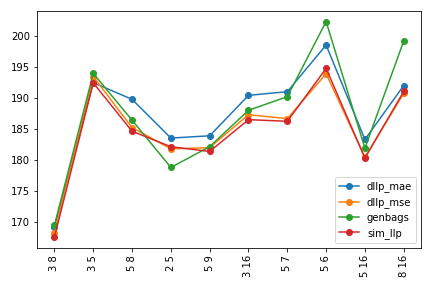}
        \caption{${\sf LabelPropStdev}$}
        \label{fig:label-prop-stdev-2-criteo-ssl}
    \end{subfigure}
    \begin{subfigure}[b]{\linewidth}
        \includegraphics[width=\linewidth]{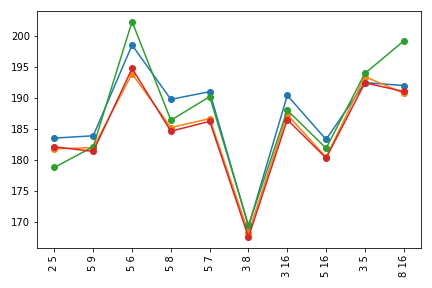}
        \caption{${\sf InterIntraRatio}$}
        \label{fig:inter-intra-mean_ratio-2-criteo-ssl}
    \end{subfigure}
    \caption{Criteo SSL Datasets performance: Test MSE on the $y$-axis, $x$-axis has the datasets ordered according to increasing metric.
    }
    \label{fig:datasets_performance_vs_bag_metrics_criteo_ssl}
    \end{minipage}
\end{figure}

\medskip
In Appendix \ref{appendix:dataset_diversity} we include the 3-D scatter plot of the the three metrics ${\sf MeanBagSize}$, ${\sf LabelPropStdev}$ and ${\sf InterIntraRatio}$ as well as three 2-D scatter plots for each of the three pairs to show that the metrics have significant variability and limited dependency with respect to each other. %

In Appendix \ref{app:Cramers} we compute the Cramer's V between grouping-key pairs $(A, B)$ and the label for each feature-bag LLP-Bench datasets from Criteo CTR. We observe that there is a significant diversity in the values which are all bounded away from $1$, indicating that the datasets are diverse in terms of the bag vs label correlations.

\section{Evaluation of baselines on LLPBench}
\label{sec:llpbench-performance}
\emph{Training and test data setup.} For each feature-grouping based dataset $(A, B)$ available after filtering in Section \ref{filter_sec}, we create the a 5 fold train/test split as follows. We first create the dataset $(A,B)$ over the entire Criteo dataset and filter out the bags as mentioned in Section \ref{sec:bag_filtering}. Using the feature-vectors in the remaining bags and their original labels, we recreate the instance-level dataset. This is then used to create a 5-fold train/test split. For each train split we recreate the bag-level training data via grouping by $(A, B)$. For the fixed-size random bag datasets, we first do a 5-fold train/test split of the entire data and partition the train splits into bags of the given fixed size. 

\emph{Training Methodology.} 
All of our baselines (listed below) are trained on the LLP datasets using minibatch training. We sample $8$ bags in each minibatch and do a forward-pass of the model on all the feature-vectors in the minibatch of bags. This allows us to obtain the predicted label proportions for each of the bags as well as the instance-level predicted-labels. Using these along with the true label proportions we compute the appropriate loss functions of the different methods. The back-propagation step then updates the model parameters.

\smallskip
We evaluate the following baseline methods on Criteo CTR LLP-Bench datasets whose methdology was explained in Section~\ref{sec:related_work}.
${\sf DLLP}$~\cite{ArdehalyC17}, ${\sf GenBags}$:    \cite{SRR},  ${\sf Easy\text{-}LLP}$\cite{easy-llp}, 
${\sf OT}$ methods:\cite{OT,DZCBV19} (${\sf Hard\text{-}EROT\text{-}LLP}$ and ${\sf Soft\text{-}EROT\text{-}LLP}$, the hard and soft entropic regularized OT methods of \cite{OT}), 
${\sf SIM\text{-}LLP}$:   \cite{KotziasDFS15} 
${\sf Mean\text{-}Map}$:  \cite{QuadriantoSCL09}. On the LLP-Bench datasets from Criteo SSCL which is a regression dataset, we evaluate the applicable baselines: ${\sf DLLP\text{-}MSE}$, ${\sf DLLP\text{-}MAE}$, ${\sf GenBags}$ and ${\sf SIM\text{-}LLP}$.

Using each of the above methods, a $2$-layer perceptron is trained using a fixed learning rate of $1e\text{-}5$. It has a multi-hot encoding layer which takes as input index encoded feature-vectors, followed by $128$ and $64$ node hidden layers respectively with {\sf relu} activation. The final output node is {\sf sigmoid} activated in case of Criteo CTR LLP datasets for which  we report the \emph{area under the ROC curve} (AUC) scores for each of the above LLP methods, while on Criteo SSCL LLP-Bench datasets MSE is the evaluation metric. We use an ${\sf Adam}$ optimizer with learning rate of $1e-5$. We also implement early-stopping with patience of $3$ epochs which monitors accuracy on the test set. Additional details of the implementation of the above methods are included in Appendix \ref{sec:additional-experiment-details}. The AUC and MSE scores for the LLP-Bench datasets and the baselines as applicable are included in Appendix \ref{appendix:baseline_results}.

\subsection{Performance of baselines on Feature Bags}\label{sec:performancevsmetrics}

\subsubsection{Criteo CTR}
Fig. \ref{fig:mean-bag-size-2} presents the trend of AUC scores for the previously described LLP methods w.r.t the ${\sf MeanBagSize}$ metric where the x-axis has the datasets ordered by increasing ${\sf MeanBagSize}$. Similarly in Figures \ref{fig:label-prop-stdev-2} and \ref{fig:inter-intra-mean_ratio-2} the datasets are ordered on the x-axis by increasing ${\sf LabelPropStdev}$ and ${\sf InterIntraRatio}$ respectively. 

First we observe that the ${\sf SIM\text{-}LLP}$, ${\sf DLLP\text{-}BCE}$, ${\sf DLLP\text{-}MSE}$, and ${\sf GenBags}$ methods are the best performing on all of the datasets with AUC scores in the $72\%$-$78\%$ range. Within them,  ${\sf SIM\text{-}LLP}$ performs the best on 41, ${\sf DLLP\text{-}MSE}$ on 6 and ${\sf DLLP\text{-}BCE}$ on 5 datasets, indicating that the additional similarity based loss helps on feature bag datasets. The AUC scores of ${\sf GenBags}$ is consistently lower than ${\sf SIM\text{-}LLP}$ and the ${\sf DLLP}$ methods, possibly due to the fact that our scenario does not have multiple bag distributions and therefore no corresponding convex programming solution to obtain the combining weights. This leads to undesirable combinations of large bags with smaller ones with roughly equal weights (ideally smaller bags should receive larger weights), leading to loss in the bag-label supervision.

On the other hand, the AUC scores of ${\sf Mean\text{-}Map}$ are the lowest for nearly all the datasets, remaining below $67\%$. The ${\sf Easy\text{-}LLP}$ and the ${\sf OT}$ methods have scores typically in the range of $65\%$-$70\%$. For reference, the same model trained on instance-level data yields around $80\%$ AUC score (see Appendix \ref{appendix:instance_level_training}).

Since the datasets are created by feature-based aggregation they may not satisfy the distributional and generative model assumptions of \cite{QuadriantoSCL09}, possibly explaining the lower scores of ${\sf Mean\text{-}Map}$. Similarly, ${\sf Easy\text{-}LLP}$ is tailored towards random-bags, and therefore may have lower scores on these datasets. The lower performance of the ${\sf OT}$ methods indicates that the pseudo-labels computed in their optimization step could significantly differ from the true labels. On the other hand, optimizing the bag-level losses as in ${\sf DLLP}$ based methods leads to more accurate model training.

The trends w.r.t. the metrics are as expected. There is a moderately decreasing trend of AUC scores with increasing ${\sf MeanBagSize}$ which is unsurprising since larger bags provide lower label supervision. More interestingly, there is a clear increasing trend with increasing ${\sf LabelPropStdev}$ along with a moderate increasing trend with increasing ${\sf InterIntraRatio}$. The latter two trends are also expected, given the explanations in Sec. \ref{sec:prelims}.

\begin{table}[]

\centering
\caption{AUC scores on Random Bags}
\begin{tabular}{ccccc}
\toprule
\multicolumn{4}{c}{\textbf{Bag size}} & \textbf{} \\
\textbf{Method} & \textit{64} & \textit{128} & \textit{256} & \textit{512} \\ \midrule
${\sf DLLP\text{-}BCE}$ & 77.54 & 76.96 & 76.24 & 75.22  \\
${\sf DLLP\text{-}MSE}$ & 77.56 & 77.03 & 76.33 & 75.42  \\
${\sf GenBags}$ & 77.08 & 76.5 & 75.75 & 75.22  \\
${\sf Easy\text{-}LLP}$ & 75.69 & 74.18 & 72.32 & 70.13  \\
${\sf OT\text{-}LLP}$ & 74.25 & 71.53 & 68.1 & 65.26  \\
${\sf SIM\text{-}LLP}$ & 77.41 & 76.73 & 75.47 & 73.13  \\
${\sf Soft\text{-}EROT\text{-}LLP}$ & 74.43 & 71.82 & 68.34 & 65.16  \\
${\sf Hard\text{-}EROT\text{-}LLP}$ & 74.27 & 71.65 & 68.08 & 65.54  \\
${\sf Mean\text{-}Map}$ & 63.17 & 63.06 & 62.83 & 62.26  \\ \bottomrule
\end{tabular}

\label{tab:rand_feat_bags}
\end{table}
\begin{table}[]
 
\centering
\caption{Range of AUC scores on feature-bag Criteo CTR datasets}
\begin{tabular}{lcc}
\toprule
\textbf{Method} & \textbf{Min AUC} & \textbf{Max AUC} \\ \midrule
${\sf DLLP\text{-}BCE}$ & 72.92 & 77.04 \\
${\sf DLLP\text{-}MSE}$ & 72.96 & 77.03 \\
${\sf GenBags}$ & 71.45 & 76.8 \\
${\sf Easy\text{-}LLP}$ & 64.59 & 71.62 \\
${\sf OT\text{-}LLP}$ & 67.57 & 72.38 \\
${\sf SIM\text{-}LLP}$ & 72.8 & 77.57 \\
${\sf Soft\text{-}EROT\text{-}LLP}$ & 67.8 & 72.5 \\
${\sf Hard\text{-}EROT\text{-}LLP}$ & 64.75 & 72.49 \\
${\sf Mean\text{-}Map}$ & 60.38 & 67.43 \\ 
\bottomrule
\end{tabular}
\label{tab:method_ranges}
\end{table}

\begin{table}[]
\caption{Dataset Statistics for Bag Datasets formed using Criteo SSCL}
\label{tab:data_stats_cssl}
\centering
\begin{tabular}{llccc}
\toprule
\textit{Col1} & \textit{Col2} & \textit{Mean Bag Size} & \textit{Std. Label Prop} & \textit{InterIntraSep} \\
\midrule
$\tilde{C}$2 & $\tilde{C}$5 & 203.91 & 12.47 & 1.18 \\
$\tilde{C}$3 & $\tilde{C}$5 & 221.79 & 12.4 & 1.24 \\
$\tilde{C}$3 & $\tilde{C}$8 & 203.48 & 11.33 & 1.22 \\
$\tilde{C}$3 & $\tilde{C}$16 & 294.38 & 12.91 & 1.23 \\
$\tilde{C}$5 & $\tilde{C}$6 & 230.63 & 13.41 & 1.2 \\
$\tilde{C}$5 & $\tilde{C}$7 & 237.24 & 13.21 & 1.2 \\
$\tilde{C}$5 & $\tilde{C}$8 & 214.65 & 12.45 & 1.2 \\
$\tilde{C}$5 & $\tilde{C}$9 & 206.6 & 12.9 & 1.19 \\
$\tilde{C}$5 & $\tilde{C}$16 & 248.85 & 13.56 & 1.23 \\
$\tilde{C}$8 & $\tilde{C}$16 & 326.17 & 13.97 & 1.26 \\
\bottomrule
\end{tabular}%
\vspace{2mm}
\end{table}
\begin{table}[]
\caption{MSE on Random Bags Datasets formed using Criteo SSCL}\label{tab:SSCL_random}
\centering
\begin{tabular}{ccccc}
\toprule
\multicolumn{4}{c}{\textbf{Bag size}} & \textbf{} \\
\textbf{Method} & \textit{64} & \textit{128} & \textit{256} & \textit{512} \\ \midrule
${\sf DLLP\text{-}MSE}$ & 159.15$\pm$\tiny{0.87} & 167.72$\pm$\tiny{1.06} & 195.13$\pm$\tiny{2.93} & 228.85$\pm$\tiny{4.42}  \\
${\sf DLLP\text{-}MAE}$ & 161.02$\pm$\tiny{0.9} & 171.66$\pm$\tiny{1.38} & 198.47$\pm$\tiny{3.91} & 232.45$\pm$\tiny{3.63}  \\
${\sf GenBags}$ & 161.52$\pm$\tiny{0.0} & 166.69$\pm$\tiny{1.07} & 172.77$\pm$\tiny{0.43} & 184.81$\pm$\tiny{1.67}  \\
${\sf SIM\text{-}LLP}$ & 159.49$\pm$\tiny{1.04} & 168.95$\pm$\tiny{1.26} & 194.1$\pm$\tiny{5.22} & 227.33$\pm$\tiny{2.18}  \\
\bottomrule
\end{tabular}
\end{table}
\begin{table}[]
\caption{Range of MSE on feature-bag datasets formed using Criteo SSCL}\label{tab:method_ranges_SSCL}
\centering
\begin{tabular}{lcc}
\toprule
\textbf{Method} & \textbf{Max MSE} & \textbf{Min MSE} \\ \midrule
${\sf DLLP\text{-}MSE}$ & 193.96 & 168.31 \\
${\sf DLLP\text{-}MAE}$ & 198.6 & 169.3 \\
${\sf GenBags}$ & 202.39 & 169.49 \\
${\sf SIM\text{-}LLP}$ & 194.82 & 167.6 \\
\bottomrule
\end{tabular}
\end{table}

\subsubsection{Criteo SSCL}
Fig. \ref{fig:mean-bag-size-2-criteo-ssl} depicts the MSE scores for the methods evaluated on the LLP datasets from  Criteo SSCL, where the x-axis has the datasets ordered by increasing ${\sf MeanBagSize}$, while Figures \ref{fig:label-prop-stdev-2-criteo-ssl} and \ref{fig:inter-intra-mean_ratio-2-criteo-ssl} have the datasets ordered by increasing ${\sf LabelPropStdev}$ and ${\sf InterIntraRatio}$ respectively.

We observe that  ${\sf SIM\text{-}LLP}$ an ${\sf DLLP\text{-}MSE}$ are the best performing methods overall, followed by ${\sf DLLP\text{-}MAE}$ and ${\sf GenBags}$. Since the comparison is w.r.t. test MSE score, it is possible that  ${\sf SIM\text{-}LLP}$ an ${\sf DLLP\text{-}MSE}$ which are MSE loss based methods optimize better for the evaluation metric. Further, as explained previously in this subsection, our scenario has just disjoint bags and no convex program is solved for ${\sf GenBags}$ possibly leading to its worse performance.

Since the number of feature bag LLP datasets from Criteo SSCL is only 10, the trends w.r.t to the bag-level metrics are not as evident as in the Criteo CTR case which has 52 such datasets. Nevertheless, we do observe a decrease in the MSE loss (i.e., increase in model performance) with increasing ${\sf MeanBagSize}$.

\subsection{Performance of Baselines on Random Bags}

\subsubsection{Criteo CTR}
Table \ref{tab:rand_feat_bags} reports the AUC scores obtained by running baselines on random bags of sizes $64, 128, 256, 512$. We observe that the performance of ${\sf Easy\text{-}LLP}$ is better random bags as compared to its performance on feature bags. As mentioned above, the performance guarantees for ${\sf Easy\text{-}LLP}$ assume random bags therefore this improvement is expected. We also notice the ${\sf SIM\text{-}LLP}$ no longer outperforms ${\sf DLLP\text{-}BCE}$ and ${\sf DLLP\text{-}MSE}$ on random bags  as it did on feature bags. The instances in feature bags are closer than those in random bags. As pairs of instances sampled in the same mini-batch are likely belong the same bag, the weight factor in the similarity loss ($\exp(\|\mathbf{x}_i - \mathbf{x}_j\|_2^2)$) is likely  to be higher in case of feature bags leading to greater supervision as compared to random bags. Further, as the random bag size increases, in  ${\sf SIM\text{-}LLP}$ due to lower label supervision the magnitude of the bag-loss could decrease, making the similarity loss more dominant which spuriously penalizes pairs of instances with different labels, leading to an overall moderation in performance.

While the performance of ${\sf Easy\text{-}LLP}$ reduces noticeably with increase in bags size, the corresponding degradation of the ${\sf OT}$ methods is particularly significant. Notably, these techniques derive instance level surrogate or pseudo-labels as training labels for the model. These trends suggest that such pseudo-labeling techniques are more severely affected by increasing bag size.

\subsubsection{Criteo SSCL}
Table \ref{tab:SSCL_random} provides the MSE scores for the ${\sf DLLP\text{-}MSE}$, ${\sf DLLP\text{-}MAE}$, ${\sf GenBags}$ and ${\sf SIM\text{-}LLP}$ methods on size $64, 128, 256, 512$ random bags datasets from Criteo SSCL. Firstly, we observe that the scores for each of the methods expectedly degrade with increasing bag size. The main takeaway however is that, while the scores for each of the methods are fairly similar for bag sizes $64$ and $128$, the ${\sf GenBags}$ method outperforms the other methods by a wide margin on bag sizes $128$ and $512$. This is unlike feature bags (see previous subsection) on which ${\sf GenBags}$ has performance similar or slightly worse than other methods. One reason could be that while the lack of  convex programming based weights for ${\sf GenBags}$ hurts on feature bags, since each random bag has the same distribution a random weight combination works much better in this case. We also do not observe such benefit with random bags on Criteo CTR, likely because the regression task on Criteo SSCL is more amenable to the ${\sf GenBags}$ methodology of linearly combining bags. 

\section{Detailed Analysis}
\label{sec:analysis}
In this section, we present more detailed analysis of various baselines and interesting feature bag datasets in the benchmark. In order to perform a subjective analysis of the datasets, we classify them -- separately for Criteo CTR and SSCL derived datasets -- based on their metric values according to the following criteria:\\
    \noindent
    \textbf{Tail size}: We perform $k$-Means clustering where each dataset is represented by a four tuple of bag size at x percentile where $x \in \{50, 70, 85, 95\}$ (See Sec \ref{sec:hardness_metrics}). For Criteo CTR we take $k = 4$ and name these clusters as \textit{very short-tailed}, \textit{short-tailed}, \textit{long-tailed} and \textit{very long-tailed} in increasing order of mean bag size at 70 percentile of each cluster. On Criteo SSCL we take $k = 3$ and similarly name the clusters as \textit{short-tailed}, \textit{medium-tailed} and \textit{long-tailed}. \\ %
    \noindent
    \textbf{Label Variation}: For Criteo CTR, we perform 4-Means clustering with each dataset represented by ${\sf LabelPropStdev}$. We classify the datasets as \textit{low}, \textit{medium}, \textit{high} and \textit{very high} ${\sf LabelPropStdev}$ dataset in increasing order of the mean ${\sf LabelPropStdev}$ of the cluster. On Criteo SSCL 3-Means clustering is done and the corresponding labels are \textit{low}, \textit{medium} and \textit{high}.\\ %
    \noindent
    \textbf{Bag Separation}: We perform 4-Means clustering on Criteo CTR derived datasets using ${\sf InterIntraRatio}$ classifying them as \textit{less-separated}, \textit{medium-separated}, \textit{well-separated} and \textit{far-separated} in increasing order of mean ${\sf InterIntraRatio}$ of each cluster. As above, for Criteo SSCL datasets 3-Means clustering is done and clusters are annotated \textit{less-separated}, \textit{medium-separated} and \textit{well-separated}. 
    
\noindent    
The detailed classifications based on mean bag size, ${\sf LabelPropStdev}$ and ${\sf InterIntraRatio}$  of all LLP-Bench datasets  are respectively reported Appendices \ref{appendix:bag_size_dist}, \ref{appendix:std_label_prop_dist} and \ref{appendix:bag_sep_stats}.

\subsection{Best and worst performance of every baseline}
Tables \ref{tab:method_ranges} and \ref{tab:method_ranges_SSCL} list the range of scores of the methods evaluated on the Criteo CTR (AUC) and Criteo SSCL (MSE) feature bag datasets respectively.  
From these a range of AUC scores of at least $4$ percentage points for all the baselines on Criteo CTR, and an MSE range of 12-15\% of the maximum MSE for each baseline on Criteo SSCL datasets is observed. This shows that LLP-Bench has enough diversity in the underlying datasets and that it can be used to find opportunities to improve SOTA algorithms to further LLP research on Criteo CTR/SSCL and other tabular datasets. 

{\bf Criteo CTR.} 
We observe that all baselines perform the best on the dataset $(\tn{C}3, \tn{C}11)$ and perform the worst on the dataset $(\tn{C}6, \tn{C}10)$.   $(\tn{C}3, \tn{C}11)$ is a \textit{very short-tailed} dataset with \textit{high} standard deviation in the label proportion and it \textit{well-separated}. This makes this dataset relatively easy to learn on.  On the other hand, $(\tn{C}6, \tn{C}10)$ is a \textit{long-tailed} dataset with \textit{very low} standard deviation in the label proportion and it \textit{Less-Separated}. This dataset represents the worst-case scenario of the three feature combination making it very hard to learn on. 

{\bf Criteo SSCL.} Here as well there is one dataset $(\tilde{C}3,\tilde{C}8)$ having the best performance and one dataset $(\tilde{C}5,\tilde{C}6)$ giving the worst performance, for all the three baselines. The former -- $(\tilde{C}3,\tilde{C}8)$ -- has the lowest ${\sf MeanBagSize}$ and also has \emph{short-tailed} bag sizes making it relatively easier to learn on, while $(\tilde{C}5,\tilde{C}6)$ is \textit{less-separated} with \textit{high} ${\sf LabelPropStdev}$ which could make it a relatively intractable LLP dataset.

\subsection{Dataset Analysis}
In this section, we select a few datasets that do not perform as expected in the trend-lines of Figure~\ref{fig:datasets_performance_vs_bag_metrics}. We analyse them further to explain the performance metrics that we observe. 

On the LLP datasets from Criteo CTR we have the following:\\
\noindent
    {\bf 1.} Datasets $(\tn{C}4, \tn{C}15)$ and $(\tn{C}4, \tn{C}10)$ are \textit{medium-separated} but they perform better than other such datasets (Fig \ref{fig:inter-intra-mean_ratio-2}). This is because they are \textit{short-tailed} and \textit{very short-tailed} respectively.\\
    \noindent
    {\bf 2.} Datasets $(\tn{C}7, \tn{C}8)$ and $(\tn{C}1, \tn{C}7)$ are \textit{well-separated} but they perform worse than other such datasets (Fig \ref{fig:inter-intra-mean_ratio-2}) because they have \textit{very low} ${\sf LabelPropStddev}$ and are \textit{long-tailed}. Similarly, datasets $(\tn{C}6, \tn{C}7), (\tn{C}7, \tn{C}14)$ and $(\tn{C}7, \tn{C}20)$ are all \textit{well-separated} and yet they perform poorly that other such datasets. They are all \textit{very long-tailed} and have \textit{very low} ${\sf LabelPropStdDev}$ (except $(\tn{C}7, \tn{C}14)$ which has a \textit{low} ${\sf LabelPropStdDev}$). \\
    \noindent
    {\bf 3.} It can also be observed from Fig \ref{fig:label-prop-stdev-2} that the datasets $(\tn{C}7, \tn{C}20), (\tn{C}1, \tn{C}7)$ and $(\tn{C}7, \tn{C}8)$ actually perform better as compared to other \textit{very low} ${\sf LabelPropStdDev}$ datasets because they are \textit{long-tailed} and \textit{well-separated} (except $(\tn{C}7, \tn{C}20)$ which is \textit{very long-tailed})\\
    \noindent
    {\bf 4.} Dataset $(\tn{C}4, \tn{C}15)$ performs poorly as compared to other \textit{medium} ${\sf LabelPropStddev}$ datasets even when it is \textit{short-tailed} as it is \textit{less-separated} (Fig \ref{fig:label-prop-stdev-2})\\
    \noindent
    {\bf 5.} Dataset $(\tn{C}7, \tn{C}26)$ is really interesting since it performs poorly as compared to other \textit{medium} ${\sf LabelPropStddev}$, \textit{far-separated} and \textit{short-tailed} datasets. \\
    \noindent
    {\bf 6.} Datasets $(\tn{C}2, \tn{C}11)$ and $(\tn{C}2, \tn{C}13)$ perform slightly better than other datasets with comparable ${\sf MeanBagSize}$ (Fig \ref{fig:mean-bag-size-2}). They are both \textit{long-tailed}, with \textit{low} ${\sf LabelPropStddev}$ and \textit{medium-separated} and hence their higher performance cannot be explained based on our metrics.
    
\noindent
On Criteo SSCL dataset we observe that ${\sf GenBags}$ performs significantly worse than the other baselines on $(\tilde{C}8,\tilde{C}16)$. This has the highest ${\sf InterIntraRatio}$ and is the only \textit{well-separated} LLP dataset from Criteo SSCL. Since ${\sf GenBags}$ takes linear combinations of bags it could diminish this well-separatedness of he bags, while the other baselines preserve it, leading to a comparative degradation of ${\sf GenBags}$.

\medskip
\noindent
{\bf Experimental Code.} The code for the experiments included in this paper is available at \url{https://github.com/google-research/google-research/tree/master/LLP_Bench}

\section{Conclusion}

We present the design of  LLP-Bench: a diverse collection of tabular LLP datasets from the Criteo CTR and SSCL datasets as a benchmark for evaluating LLP techniques on binary claasification and regression tasks. In this process, our work analyzes  bag collections given by grouping on at most two categorical features, based on their distribution of bags as well as label proportions. We show that LLP-Bench has significant diversity in the nature of the datasets that are present in it. To the best of our knowledge, LLP-Bench  is the first large scale tabular benchmark with extensive diversity in the underlying datasets. 
We presented a detailed analysis of 9 SOTA baselines on LLP-Bench and explained their performance by correlating it with the underlying dataset characteristics. Again, to the best of our knowledge no other study has compared SOTA techniques to this level of detail that we have. 
We believe our work addresses to a great extent the current lack of a large scale tabular LLP benchmark.  LLP-Bench along with the four dataset hardness metrics can be used to systematically study and design new  LLP techniques.

{\bf Limitations and Future Work.}
While the performance and outlier analysis (Sections \ref{sec:performancevsmetrics} and \ref{sec:analysis}) use up to four metrics, a deeper explanation of outliers could be possible using additional metrics. Future work could also incorporate more LLP algorithms as well as additional model architectures.

\FloatBarrier

\bibliographystyle{plainnat}
\bibliography{references}

\newpage

\appendix

\section{Proofs of Lemmas and Algorithms}

While ${\sf BagSep}$ is not a metric since ${\sf BagSep}(B, B)$ is not necessarily zero, the following lemma (proved in Appendix \ref{sec:bagseplemma}) shows that it does satisfy the other metric properties.
\begin{lemma}
${\sf BagSep}$ satisfies non-negativity, symmetry and triangle inequality.
\end{lemma}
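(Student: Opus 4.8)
The plan is to verify each of the three properties directly from the definition ${\sf BagSep}(B, B', d) = \frac{1}{|B||B'|}\sum_{\bx \in B}\sum_{\bx' \in B'} d(\bx, \bx')$, reducing each property of ${\sf BagSep}$ to the corresponding property of the underlying metric $d$ on $\R^n$.

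Non-negativity and symmetry are immediate and I would dispatch them first. Since $d$ is a metric, every summand satisfies $d(\bx, \bx') \geq 0$, and dividing the resulting nonnegative sum by the strictly positive normalizer $|B||B'|$ preserves non-negativity. For symmetry, I would apply $d(\bx, \bx') = d(\bx', \bx)$ termwise and then relabel the two summation indices, which converts the double sum defining ${\sf BagSep}(B, B', d)$ into the one defining ${\sf BagSep}(B', B, d)$.

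The substantive step, and the one I expect to be the main obstacle, is the triangle inequality: for three bags $B, B', B''$ I want to show ${\sf BagSep}(B, B'', d) \leq {\sf BagSep}(B, B', d) + {\sf BagSep}(B', B'', d)$. The key idea is an averaging argument over the intermediate bag $B'$. For fixed $\bx \in B$ and $\bx'' \in B''$, the metric triangle inequality gives $d(\bx, \bx'') \leq d(\bx, \bx') + d(\bx', \bx'')$ for \emph{every} $\bx' \in B'$; averaging this bound over all $\bx' \in B'$ leaves the left-hand side unchanged (it does not depend on $\bx'$) and produces $\frac{1}{|B'|}\sum_{\bx' \in B'} d(\bx, \bx') + \frac{1}{|B'|}\sum_{\bx' \in B'} d(\bx', \bx'')$ on the right. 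I would then average the resulting inequality over $\bx \in B$ and $\bx'' \in B''$.

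The calculation that finishes the argument is to observe that each of the two triple sums on the right telescopes correctly. In the first term the summand $d(\bx, \bx')$ is independent of $\bx''$, so summing over $\bx'' \in B''$ contributes exactly the factor $|B''|$ that cancels against the normalizer, leaving ${\sf BagSep}(B, B', d)$; symmetrically, in the second term the summand is independent of $\bx$, so summing over $\bx \in B$ cancels $|B|$ and leaves ${\sf BagSep}(B', B'', d)$. The only delicate point is keeping the three normalization factors $|B|, |B'|, |B''|$ bookkept correctly through the successive averages, which is where an arithmetic slip could occur, though the manipulation is routine. Note that no identity-of-indiscernibles property is claimed (consistent with the preceding remark that ${\sf BagSep}(B,B)$ need not vanish), so nothing further need be checked.
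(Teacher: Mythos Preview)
Your proposal is correct and follows essentially the same approach as the paper: both dispatch non-negativity and symmetry as obvious from the definition and the metric axioms, and both prove the triangle inequality by starting from the pointwise bound $d(\bx,\bx'') \le d(\bx,\bx') + d(\bx',\bx'')$ and successively averaging over the three bags, with the intermediate bag $B'$ handled first so that its cardinality cancels in the normalization.
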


We have the following lemma proved in Appendix \ref{appendix:meanbag}.
\begin{lemma} \label{lem:meanbag}
For any bag $B$, (i) ${\sf InterBagSep}(B, d)/{\sf BagSep}(B, B, d) \geq 1/2$ when $d$ is a metric, (ii) ${\sf InterBagSep}(B, d)/{\sf BagSep}(B, B, d) \geq 1/4$ when $d$ is the $\ell_2^2$ distance.
\end{lemma}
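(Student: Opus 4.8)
The plan is to reduce both parts to a single pairwise comparison: for any two distinct bags $B, B' \in \mathcal{B}$, I will show ${\sf BagSep}(B, B, d) \le c\,{\sf BagSep}(B, B', d)$ with $c = 2$ when $d$ is a metric and $c = 4$ when $d = \ell_2^2$, and then average this over all $B' \ne B$. Since ${\sf InterBagSep}(B, d) = \frac{1}{|\mathcal{B}|-1}\sum_{B' \ne B}{\sf BagSep}(B, B', d)$ is precisely the mean of the quantities ${\sf BagSep}(B, B', d)$, averaging the pairwise bound $\tfrac{1}{c}{\sf BagSep}(B,B,d) \le {\sf BagSep}(B, B', d)$ over the $|\mathcal{B}|-1$ choices of $B'$ immediately yields ${\sf InterBagSep}(B, d) \ge \tfrac{1}{c}{\sf BagSep}(B, B, d)$, which is the desired ratio bound $\ge 1/c$.

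For part (i) the engine is the triangle inequality. Fix $B'$ and for each triple $\bx, \bx' \in B$, $\bu \in B'$ write $d(\bx, \bx') \le d(\bx, \bu) + d(\bu, \bx')$. Applying the averaging operator $\frac{1}{|B|^2|B'|}\sum_{\bx,\bx' \in B}\sum_{\bu \in B'}$ to both sides, the left side collapses to ${\sf BagSep}(B, B, d)$ (the summand is independent of $\bu$, and the $\bu$-sum cancels the $|B'|$). On the right, the term $d(\bx,\bu)$ does not depend on $\bx'$, so the $\bx'$-sum contributes a factor $|B|$ and collapses it to ${\sf BagSep}(B, B', d)$; by symmetry of $d$ the second term collapses to ${\sf BagSep}(B', B, d) = {\sf BagSep}(B, B', d)$. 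Hence ${\sf BagSep}(B, B, d) \le 2\,{\sf BagSep}(B, B', d)$, giving $c = 2$.

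For part (ii), $d = \ell_2^2$ is not a metric, but it satisfies the relaxed triangle inequality $\|\bx - \bx'\|_2^2 \le 2\|\bx - \bu\|_2^2 + 2\|\bu - \bx'\|_2^2$, obtained by expanding $\|\bx - \bx'\|_2^2 = \|(\bx - \bu) + (\bu - \bx')\|_2^2$ and applying the elementary bound $(a+b)^2 \le 2a^2 + 2b^2$. Running the identical averaging argument as in part (i), now carrying the extra factor $2$, gives ${\sf BagSep}(B, B, d) \le 4\,{\sf BagSep}(B, B', d)$, i.e. $c = 4$.

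The only point requiring care is the bookkeeping in the mixed averages, namely that $\frac{1}{|B|^2|B'|}\sum_{\bx,\bx' \in B}\sum_{\bu \in B'} d(\bx,\bu) = {\sf BagSep}(B, B', d)$ because the free index $\bx'$ contributes the cancelling factor $|B|$, together with the appeal to symmetry of $d$ to identify the two cross terms; this is routine rather than a genuine obstacle. I would also note that for $\ell_2^2$ the constant $1/4$ is not tight: via the centroid decomposition ${\sf BagSep}(B, B', \ell_2^2) = \sigma_B^2 + \sigma_{B'}^2 + \|\bar{\bx}_B - \bar{\bx}_{B'}\|_2^2 \ge \sigma_B^2 = \tfrac12 {\sf BagSep}(B, B, \ell_2^2)$, where $\sigma_B^2$ denotes the variance of the feature-vectors in $B$, one in fact obtains the ratio $\ge 1/2$; but the relaxed-triangle-inequality route already suffices for the stated bound.
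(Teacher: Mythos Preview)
Your proof is correct and follows essentially the same route as the paper: the paper first packages the bag-level triangle inequality ${\sf BagSep}(B,B,d) \le {\sf BagSep}(B,B',d) + {\sf BagSep}(B',B,d)$ (and its relaxed $\ell_2^2$ analogue) as a separate lemma and then averages over $B' \ne B$, whereas you derive that same pairwise inequality inline from the pointwise (relaxed) triangle inequality before averaging---the content is identical. Your closing remark that the centroid decomposition actually yields the sharper bound $\ge 1/2$ for $\ell_2^2$ is a nice observation that goes beyond what the paper records.
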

The following is a straightforward corollary of Lemma \ref{lem:meanbag}.
\begin{corollary}
(i) When $d$ is a metric: ${\sf MeanInterBagSep}(\mc{B}, d)/{\sf MeanIntraBagSep}(\mc{B}, d) \geq 1/2.$ 
(ii) When $d$ is the $\ell_2^2$ distance: ${\sf MeanInterBagSep}(\mc{B}, d)/{\sf MeanIntraBagSep}(\mc{B}, d) \geq 1/4.$ 
\end{corollary}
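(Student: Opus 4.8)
The plan is to reduce both parts to a single pairwise comparison. Since ${\sf InterBagSep}(B, d)$ is by definition the average of ${\sf BagSep}(B, B', d)$ over all $B' \neq B$, it suffices to establish, for \emph{every} individual bag $B' \neq B$, a lower bound of the form ${\sf BagSep}(B, B', d) \geq c\, {\sf BagSep}(B, B, d)$, with $c = \tfrac{1}{2}$ in the metric case and $c = \tfrac{1}{4}$ in the $\ell_2^2$ case. Averaging such a bound over the $|\mc{B}|-1$ bags $B' \neq B$ then immediately yields the claimed ratios, so the entire argument collapses to controlling the intra-bag separation of $B$ by its separation to a single other bag.

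For part (i), I would pass through an auxiliary ``bridge'' point $\bx' \in B'$. Fixing $\bx, \bx'' \in B$, the triangle inequality gives $d(\bx, \bx'') \leq d(\bx, \bx') + d(\bx', \bx'')$; averaging this over $\bx' \in B'$ leaves the left side unchanged and turns the right side into $\tfrac{1}{|B'|}\sum_{\bx'}d(\bx,\bx') + \tfrac{1}{|B'|}\sum_{\bx'}d(\bx',\bx'')$. I would then average over $\bx \in B$ and $\bx'' \in B$: the left side becomes ${\sf BagSep}(B,B,d)$, and each of the two terms on the right collapses (because its summand is independent of one of the two $B$-variables) to exactly ${\sf BagSep}(B,B',d)$, using the symmetry of $d$ for the second term. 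This yields ${\sf BagSep}(B,B,d) \leq 2\,{\sf BagSep}(B,B',d)$, i.e. the required $c = \tfrac{1}{2}$.

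For part (ii) the triangle inequality is unavailable, but $\ell_2^2$ satisfies the relaxed substitute $\|\bx - \bx''\|_2^2 \leq 2\|\bx - \bx'\|_2^2 + 2\|\bx' - \bx''\|_2^2$, which follows from $\|\va + \vb\|_2^2 \leq 2\|\va\|_2^2 + 2\|\vb\|_2^2$ applied to $\va = \bx - \bx'$ and $\vb = \bx' - \bx''$. Running the identical averaging scheme with this inequality replaces the coefficient $1$ in front of each bridge term by $2$, giving ${\sf BagSep}(B,B,d) \leq 4\,{\sf BagSep}(B,B',d)$ and hence $c = \tfrac{1}{4}$.

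I expect the only real subtlety to be the bookkeeping in the averaging step: verifying that after summing over $\bx, \bx''$ each of the two bridge terms does reduce to ${\sf BagSep}(B,B',d)$ rather than some other combination. Once one notes the independence of each summand from one $B$-index (so that summing over that index contributes a clean factor $|B|$ which cancels the normalization), everything is routine. No properties of $B'$ beyond nonemptiness are used, so the per-pair bound holds uniformly and the final averaging over $B' \neq B$ is immediate.
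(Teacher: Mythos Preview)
Your proposal is correct and matches the paper's approach: the paper packages your pointwise-triangle-inequality-plus-averaging step as a standalone lemma (${\sf BagSep}$ itself satisfies the triangle inequality, with a relaxed version for $\ell_2^2$), then specializes it to $B_1=B_3=B$, $B_2=B'$ to obtain exactly your per-pair bound, whereas you go directly to that specialization --- the mathematical content is identical. One small bookkeeping point: averaging the per-pair bound over $B' \neq B$ only yields the per-bag inequality ${\sf InterBagSep}(B,d) \geq c\,{\sf BagSep}(B,B,d)$; to reach the \emph{Mean} ratio in the corollary you still need one further (trivial) average over $B \in \mc{B}$, which you do not mention explicitly.
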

We expect this ratio to achieve values substantially less than $1$ in adversarial cases. Appendix \ref{sec:example} provides an example of such a case. For convenience, for $\mc{B}$, we use ${\sf InterIntraRatio}$ to denote  ${\sf MeanInterBagSep}(\mc{B}, d)/{\sf MeanIntraBagSep}(\mc{B}, d)$ when $d = \ell_2^2$.

\subsection{Proof of Lemma A.1}
\label{sec:bagseplemma}
\begin{proof} From Def. \ref{def:bagsep}, the non-negativity and symmetry properties are obvious.

\textbf{Triangle Inequality} : let $B_1, B_2, B_3 \in \mathcal{B}$, and we use the following notation for convenience: $B_1 = \{x_i | i \in [n]\}$, $B_2 = \{y_j | j \in [m]\}$, $B_3 = \{z_k | k \in [l]\}$.
As $d$ is a metric, we know that for all $i \in [n], j \in [m]$ and $k \in [l]$, $d(x_i, z_k) \leq d(x_i, y_j) + d(y_j, z_k)$.
Hence, 
\begin{eqnarray}
& & d(x_i, z_k) \leq \frac{\sum_{j=1}^{j=m}d(x_i, y_j)}{m} + \frac{\sum_{j=1}^{j=m}d(y_j, z_k)}{m} \nonumber \\
&\Rightarrow &\frac{\sum_{i=1}^{i=n}d(x_i, z_k)}{n} \leq \frac{\sum_{i=1}^{i=n}\sum_{j=1}^{j=m}d(x_i, y_j)}{nm} + \frac{\sum_{j=1}^{j=m}d(y_j, z_k)}{m} \nonumber \\
& \Rightarrow & \frac{\sum_{k=1}^{k=l}\sum_{i=1}^{i=n}d(x_i, z_k)}{ln} \leq \frac{\sum_{i=1}^{i=n}\sum_{j=1}^{j=m}d(x_i, y_j)}{nm} + \frac{\sum_{k=1}^{k=l}\sum_{j=1}^{j=m}d(y_j, z_k)}{ml}\nonumber \\
& \Rightarrow & {\sf BagSep}(B_1, B_3, d) \leq {\sf BagSep}(B_1, B_2, d) + {\sf BagSep}(B_2, B_3, d) \nonumber
\end{eqnarray}
\end{proof}

\subsection{Proof of Lemma A.2}
\label{appendix:meanbag}
\begin{proof}
Let $B \in \mathcal{B}$. Using triangle inequality and symmetry from \textit{Lemma A.1}:
\begin{eqnarray}
& & \forall B' \in \mc{B}, {\sf BagSep}(B, B, d) \leq {\sf BagSep}(B, B', d) + {\sf BagSep}(B', B, d) \nonumber \\
&\Rightarrow &\forall B' \in \mc{B}, {\sf BagSep}(B, B, d) \leq 2{\sf BagSep}(B', B, d) \nonumber \\
& \Rightarrow & {\sf BagSep}(B, B, d) \leq 2\frac{\sum_{B' \in \mathcal{B}, B' \not= B}{\sf BagSep}(B', B, d)}{|\mathcal{B}|-1}\nonumber \\
& \Rightarrow & {\sf BagSep}(B, B, d) \leq 2{\sf InterBagSep}(B, d) \nonumber \\
& \Rightarrow & {\sf InterBagSep}(B, d)/{\sf BagSep}(B, B, d) \geq 1/2 \nonumber
\end{eqnarray}
\end{proof}
The squared euclidean distance is not a metric as it follows all properties other than the triangle inequality. Hence, we show the following
\begin{lemma}
For any $a, b \in R^n$, $\frac{1}{2}||a+b||_2^2 \leq ||a||_2^2 + ||b||_2^2$
\end{lemma}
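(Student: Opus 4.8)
The plan is to reduce the claimed inequality to the non-negativity of a squared norm, via the parallelogram identity. First I would expand, using bilinearity of the standard inner product on $\R^n$, the two identities $\|a+b\|_2^2 = \|a\|_2^2 + 2\langle a, b\rangle + \|b\|_2^2$ and $\|a-b\|_2^2 = \|a\|_2^2 - 2\langle a, b\rangle + \|b\|_2^2$. Adding these cancels the cross term $2\langle a,b\rangle$ and yields the parallelogram law $\|a+b\|_2^2 + \|a-b\|_2^2 = 2\|a\|_2^2 + 2\|b\|_2^2$.

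Next, since $\|a-b\|_2^2 \geq 0$ for every $a, b \in \R^n$, I would discard this non-negative term from the left-hand side to obtain $\|a+b\|_2^2 \leq 2\|a\|_2^2 + 2\|b\|_2^2$. Dividing through by $2$ produces exactly the asserted bound $\frac{1}{2}\|a+b\|_2^2 \leq \|a\|_2^2 + \|b\|_2^2$.

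There is no substantive obstacle here; this is an elementary consequence of the inner-product structure. The only point worth flagging is why the constant comes out as it does: because the statement concerns the \emph{squared} Euclidean distance $\ell_2^2$ (which fails the triangle inequality), the discarded term forces a factor of $2$ rather than the factor one would get for a genuine metric. This is precisely the $\ell_2^2$ analogue of the triangle step that is needed to push through the weaker $1/4$ bound in Lemma~\ref{lem:meanbag}(ii), as opposed to the $1/2$ bound available for metrics. An equivalent one-line route would be to invoke Young's inequality $2\langle a,b\rangle \leq \|a\|_2^2 + \|b\|_2^2$ and substitute it directly into the expansion of $\|a+b\|_2^2$, but the parallelogram argument makes the role of the discarded $\|a-b\|_2^2$ term most transparent.
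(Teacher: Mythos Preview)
Your proof is correct. The paper actually states this lemma without proof, treating it as an elementary fact; your parallelogram-law argument (equivalently, the Young-inequality variant you mention) is the standard justification and fills that gap cleanly.
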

\begin{theorem}
Given $X$, $Y$ and $\mathcal{B}$, for any $B_1, B_2, B_3 \in \mathcal{B}$,
$$\frac{1}{2}{\sf BagSep}(B_1, B_3, \ell_2^2) \leq {\sf BagSep}(B_1, B_2, \ell_2^2) + {\sf BagSep}(B_2, B_3, \ell_2^2)$$
\end{theorem}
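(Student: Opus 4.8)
The plan is to mirror the proof of Lemma A.1 almost verbatim, replacing the metric triangle inequality with the weaker bound $\tfrac{1}{2}\|a+b\|_2^2 \leq \|a\|_2^2 + \|b\|_2^2$ supplied by the preceding lemma, and then tracking the constant $\tfrac{1}{2}$ as it propagates through the averaging. I adopt the same notation as in Lemma A.1: write $B_1 = \{x_i \mid i \in [n]\}$, $B_2 = \{y_j \mid j \in [m]\}$, $B_3 = \{z_k \mid k \in [l]\}$, and recall that ${\sf BagSep}(B, B', \ell_2^2) = \frac{1}{|B||B'|}\sum_{x \in B}\sum_{x' \in B'}\|x - x'\|_2^2$.

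First I would establish a pointwise inequality. For arbitrary indices $i, j, k$, apply the preceding lemma with $a = x_i - y_j$ and $b = y_j - z_k$, so that $a + b = x_i - z_k$ telescopes; this gives
\[
\tfrac{1}{2}\|x_i - z_k\|_2^2 \leq \|x_i - y_j\|_2^2 + \|y_j - z_k\|_2^2.
\]
Next I would average this inequality successively over the three index sets, exactly as in the chain of implications in Lemma A.1. Averaging over $j \in [m]$ leaves the left-hand side unchanged, since it does not depend on $j$, and turns the right-hand side into $\frac{1}{m}\sum_{j}\|x_i - y_j\|_2^2 + \frac{1}{m}\sum_{j}\|y_j - z_k\|_2^2$. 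Averaging the resulting inequality over $i \in [n]$ and then over $k \in [l]$, and observing that each averaged double sum is by definition a ${\sf BagSep}$ value, yields
\[
\tfrac{1}{2}{\sf BagSep}(B_1, B_3, \ell_2^2) \leq {\sf BagSep}(B_1, B_2, \ell_2^2) + {\sf BagSep}(B_2, B_3, \ell_2^2),
\]
which is the claim.

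I do not expect a genuine obstacle here: the only substantive ingredient is the elementary preceding lemma, which is itself merely a restatement of $0 \leq \|(x_i - y_j) - (y_j - z_k)\|_2^2$. The single point requiring care is bookkeeping, namely confirming that the factor $\tfrac{1}{2}$ attached to the left-hand side survives all three averaging steps intact. It does, because at each stage the averaging operation is linear and acts on a term that is constant in the current averaging variable, so the constant simply rides along unchanged.
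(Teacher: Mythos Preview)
Your proposal is correct and matches the paper's approach exactly: the paper's proof is the single sentence ``Follows by replacing triangle inequality in Lemma~A.1 with inequality in Lemma~A.4,'' and you have spelled out precisely that substitution, including the telescoping choice $a = x_i - y_j$, $b = y_j - z_k$ and the three successive averagings.
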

\begin{proof}
Follows by replacing triangle inequality in \textit{Lemma A.1} with inequality in \textit{Lemma A.4}
\end{proof}
\begin{corollary}
${\sf InterBagSep}(B, \ell_2^2)/{\sf BagSep}(B, B, \ell_2^2) \geq 1/4$
\end{corollary}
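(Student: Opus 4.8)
The plan is to replay the argument used in the proof of \textit{Lemma A.2(i)} verbatim, but with the genuine triangle inequality replaced by the relaxed (factor-$1/2$) triangle inequality for the squared Euclidean distance established in \textit{Theorem A.5}. The key observation is that ${\sf InterBagSep}(B, \ell_2^2)$ is by definition the average of ${\sf BagSep}(B, B', \ell_2^2)$ over all $B' \neq B$, so it suffices to lower-bound each individual term by $\tfrac14 {\sf BagSep}(B, B, \ell_2^2)$ and then average; the averaging leaves the constant $\tfrac14$ untouched.

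Concretely, I would first instantiate \textit{Theorem A.5} at $B_1 = B_3 = B$ and $B_2 = B'$, for an arbitrary $B' \in \mathcal{B}$ with $B' \neq B$, obtaining
\[
\tfrac{1}{2}\,{\sf BagSep}(B, B, \ell_2^2) \;\leq\; {\sf BagSep}(B, B', \ell_2^2) + {\sf BagSep}(B', B, \ell_2^2).
\]
By the symmetry property from \textit{Lemma A.1}, the two terms on the right are equal, so the right-hand side equals $2\,{\sf BagSep}(B, B', \ell_2^2)$, and rearranging gives
\[
\tfrac{1}{4}\,{\sf BagSep}(B, B, \ell_2^2) \;\leq\; {\sf BagSep}(B, B', \ell_2^2) \quad \text{for every } B' \neq B.
\]
Averaging this inequality over the $|\mathcal{B}| - 1$ bags $B' \neq B$ converts the right-hand side into exactly ${\sf InterBagSep}(B, \ell_2^2)$ by definition, while the left-hand side is unchanged, yielding $\tfrac14 {\sf BagSep}(B, B, \ell_2^2) \leq {\sf InterBagSep}(B, \ell_2^2)$. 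Dividing through by the nonnegative quantity ${\sf BagSep}(B, B, \ell_2^2)$ then gives the claimed bound.

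There is no substantive obstacle here: the corollary is a mechanical consequence of \textit{Theorem A.5} in precisely the way the $1/2$ bound of \textit{Lemma A.2(i)} follows from the honest triangle inequality. The only point worth flagging is the origin of the factor $4$ (rather than $2$): since \textit{Theorem A.5} only controls $\tfrac12{\sf BagSep}(B,B,\ell_2^2)$ rather than ${\sf BagSep}(B,B,\ell_2^2)$ itself, one additional factor of $2$ is lost, which is exactly the price of the weakened triangle inequality for $\ell_2^2$. I would also implicitly assume $|\mathcal{B}| \geq 2$ so that ${\sf InterBagSep}$ is well defined, and interpret the ratio in the obvious degenerate sense should ${\sf BagSep}(B, B, \ell_2^2)$ vanish, in which case the underlying inequality $\tfrac14\,{\sf BagSep}(B,B,\ell_2^2) \leq {\sf InterBagSep}(B,\ell_2^2)$ holds trivially.
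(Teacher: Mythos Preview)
Your proposal is correct and follows exactly the route the paper intends: the paper's own proof simply reads ``Follows by replacing inequality in proof of \textit{Lemma A.2} with inequality in \textit{Theorem A.5},'' and you have spelled out precisely that substitution. The only cosmetic remark is that the symmetry of ${\sf BagSep}$ you invoke does not actually require $d$ to be a metric (it is immediate from Definition~\ref{def:bagsep} for any symmetric $d$, including $\ell_2^2$), so citing Lemma~A.1 for symmetry is harmless but slightly over-attributed.
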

\begin{proof}
Follows by replacing inequality in proof of \textit{Lemma A.2} with inequality in \textit{Theorem A.5}
\end{proof}
\subsection{Proof of Corollary A.3}
\label{appendix:a3}
\begin{proof}
Given $X$, $Y$ and $\mathcal{B}$, and metric $d$ in $R^n$. 
Starting with inequality in \textit{Lemma A.2}\\
\begin{eqnarray}
& & \forall B \in \mc{B}, {\sf BagSep}(B, B, d) \leq 2{\sf InterBagSep}(B, d) \nonumber \\
&\Rightarrow &\sum_{B \in \mathcal{B}}{\sf BagSep}(B, B, d) \leq 2\sum_{B \in \mathcal{B}}{\sf InterBagSep}(B, d) \nonumber \\
& \Rightarrow & \frac{1}{|\mathcal{B}|}{\sf BagSep}(B, B, d) \leq 2\frac{1}{|\mathcal{B}|}{\sf InterBagSep}(B, d)\nonumber \\
& \Rightarrow & {\sf MeanInterBagSep}(\mc{B}, d)/{\sf MeanIntraBagSep}(\mc{B}, d) \geq 1/2 \nonumber 
\end{eqnarray}
Starting with inequality for $\ell_2^2$-distance in \textit{Lemma A.2}, we get\\
${\sf MeanInterBagSep}(\mc{B}, \ell_2^2)/{\sf MeanIntraBagSep}(\mc{B}, \ell_2^2) \geq 1/2$
\end{proof}
\subsection{Bag Distance Results using squared euclidean distance}
\label{appendix:sq_l2_bag_sep}
We use the squared euclidean distance to compute the bag distances as it makes the computation faster. Algorithm \ref{alg:one} is used to compute the Bag Separation for any general metric $d$.
\begin{algorithm}
\caption{Compute Bag Separation of a dataset}\label{alg:one}
\KwData{Set of bags $\mathcal{B}$, metric $d$ on $R^n$}
\KwResult{${\sf BagSepMatrix}(\mc{B}, d)$}
${\sf BagSepMatrix} \gets [0]_{|\mathcal{B}|x|\mathcal{B}|}$\\
\For{$B_1 \in \mathcal{B}$}{
    \For{$B_2 \in \mathcal{B}$} {
        \For{$i \in B_1$}{
            \For{$j \in B_2$}{
                ${\sf BagSepMatrix}[B_1, B_2] \gets {\sf BagSepMatrix}[B_1, B_2] + d(x^{(i)}, x^{(j)})$
            }
        }
        ${\sf BagSepMatrix}[B_1, B_2] \gets {\sf BagSepMatrix}[B_1, B_2]/(|B_1||B_2|)$ 
    }
}
\end{algorithm}
\begin{theorem}
Assuming the Bags to be disjoint, the running time of Algorithm \ref{alg:one} is $O(m^2n)$ where $m$ is the number of examples and n is the dimension of the input space.
\end{theorem}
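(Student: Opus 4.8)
The plan is to bound the running time by accounting separately for the three phases of Algorithm \ref{alg:one} --- initialization of the matrix, the nested distance accumulation, and the final normalization --- and to observe that the dominant cost is the innermost distance evaluation summed over all ordered pairs of instances.

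First I would observe that evaluating the distance $d(x^{(i)}, x^{(j)})$ between two vectors in $\R^n$ costs $O(n)$ time for the distances considered here (in particular the squared Euclidean distance $\ell_2^2$, which requires $n$ subtractions, $n$ multiplications, and $n-1$ additions). The body of the two innermost loops is executed exactly once for each ordered pair $(i,j)$ with $i \in B_1$ and $j \in B_2$, and each execution performs one such $O(n)$ distance evaluation together with an $O(1)$ accumulation update.

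Next I would count the total number of innermost iterations. Summing over the two outer loops over bags, this count is
\begin{equation*}
\sum_{B_1 \in \mathcal{B}} \sum_{B_2 \in \mathcal{B}} |B_1|\,|B_2| = \left(\sum_{B \in \mathcal{B}} |B|\right)^2 .
\end{equation*}
The single substantive step is to invoke disjointness of the bags: since each of the $m$ instances lies in exactly one bag, $\sum_{B \in \mathcal{B}} |B| = m$, so the double sum collapses to $m^2$. Multiplying by the $O(n)$ per-iteration cost yields $O(m^2 n)$ for the accumulation phase.

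Finally I would dispose of the remaining overhead. Initializing the $|\mathcal{B}| \times |\mathcal{B}|$ matrix costs $O(|\mathcal{B}|^2)$, and the normalization step that divides each entry by $|B_1||B_2|$ also costs $O(|\mathcal{B}|^2)$; since $|\mathcal{B}| \le m$, both are $O(m^2)$ and are absorbed into the accumulation cost. Summing the three contributions gives the claimed $O(m^2 n)$ bound. There is no genuine obstacle beyond the one bookkeeping observation that disjointness makes $\sum_{B_1,B_2}|B_1||B_2|$ equal to $m^2$ rather than something larger such as $|\mathcal{B}| \cdot m \cdot \max_B |B|$.
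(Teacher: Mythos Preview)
Your proof is correct and follows essentially the same approach as the paper: both arguments reduce to the identity $\sum_{B_1 \in \mathcal{B}}\sum_{B_2 \in \mathcal{B}}|B_1||B_2|\cdot n = m^2 n$, using disjointness to collapse $\sum_B |B|$ to $m$. Your version is simply more explicit about the per-distance cost and the lower-order overhead of initialization and normalization, which the paper's one-line proof omits.
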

\begin{proof}
Runtime = $\sum_{B_1 \in \mathcal{B}}\sum_{B_2 \in \mathcal{B}}|B_1||B_2|n = m^2n$
\end{proof}
Now, this computation can be simplified due to the following. Let $\|B\| := \frac{1}{|B|}\sum_{x \in B}\|x\|_2^2$ and $\mu(B) := \frac{1}{|B|}\sum_{x \in B}\,x$\\
\begin{lemma}
\label{lemma:ell_2_case}
For any $B, B' \in \mathcal{B}$, 
$${\sf BagSep}(B, B', \ell_2^2) = \|B\| + \|B'\| - 2 \langle \mu(B), \mu(B') \rangle$$
\end{lemma}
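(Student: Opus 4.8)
The plan is to prove the identity by a direct algebraic expansion, since ${\sf BagSep}$ is nothing more than the average of pairwise distances over the two bags. First I would substitute $d = \ell_2^2$ into Definition \ref{def:bagsep}, giving
$${\sf BagSep}(B, B', \ell_2^2) = \frac{1}{|B||B'|}\sum_{\bx \in B}\sum_{\bx' \in B'}\|\bx - \bx'\|_2^2,$$
and then apply the polarization expansion $\|\bx - \bx'\|_2^2 = \|\bx\|_2^2 + \|\bx'\|_2^2 - 2\langle \bx, \bx'\rangle$ inside the double sum.

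Next I would split the double sum into its three natural pieces and handle the normalizing factor $1/(|B||B'|)$ term by term. In the piece involving $\|\bx\|_2^2$, the summand does not depend on $\bx'$, so summing over $\bx' \in B'$ multiplies by $|B'|$, cancelling the $1/|B'|$ and leaving $\frac{1}{|B|}\sum_{\bx \in B}\|\bx\|_2^2 = \|B\|$; by symmetry the $\|\bx'\|_2^2$ piece contributes $\|B'\|$. For the cross term I would invoke bilinearity of the inner product to factor the double sum as $\langle \tfrac{1}{|B|}\sum_{\bx\in B}\bx,\ \tfrac{1}{|B'|}\sum_{\bx'\in B'}\bx'\rangle = \langle \mu(B), \mu(B')\rangle$, which together with the coefficient $-2$ yields the final term.

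There is no genuine obstacle here; the argument is a routine expansion and regrouping, and the only point requiring care is the bookkeeping of the factors $1/(|B||B'|)$ against the definitions of $\|B\|$ and $\mu(B)$. The value of the identity is computational rather than conceptual: it lets one precompute each bag's mean $\mu(B)$ and mean squared norm $\|B\|$ a single time, reducing the per-pair cost of the naive pairwise evaluation in Algorithm \ref{alg:one} from $O(|B||B'|n)$ to $O(n)$.
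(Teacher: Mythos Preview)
Your proposal is correct and follows essentially the same approach as the paper: expand $\|\bx-\bx'\|_2^2$ into $\|\bx\|_2^2 + \|\bx'\|_2^2 - 2\langle \bx,\bx'\rangle$ inside the double sum, then simplify the three resulting pieces using the definitions of $\|B\|$ and $\mu(B)$. The paper's proof is terser but identical in substance.
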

\begin{proof}
Let $B = \{x_i | i \in [n]\}, B' = \{y_j | j \in [m]\}$
\begin{eqnarray}
& {\sf BagSep}(B, B', \ell_2^2) = \frac{1}{mn}\sum_{i=1}^{i=n}\sum_{j=1}^{j=m}\|x_i-y_j\|_2^2 \nonumber \\
& = \frac{1}{n}\sum_{i=1}^{i=n}\|x_i\|_2^2 + \frac{1}{m}\sum_{j=1}^{j=m}\|y_j\|_2^2 - \frac{2}{mn}\sum_{i=1}^{i=n}\sum_{j=1}^{j=m} \langle x_i, y_j \rangle \nonumber \\
& = \frac{1}{n}\sum_{i=1}^{i=n}\|x_i\|_2^2 + \frac{1}{m}\sum_{j=1}^{j=m}\|y_j\|_2^2 - \frac{2}{mn} \langle \sum_{i=1}^{i=n}x_i, \sum_{j=1}^{j=m}y_j \rangle \nonumber
\end{eqnarray}
\end{proof}
\begin{lemma}
Given $\mc{B}$ and $B \in \mc{B}$,
\begin{eqnarray}
{\sf IntraBagSep}(B, \ell_2) & = & 2[\|B\| - \|\mu(B)\|_2^2] \nonumber \\
{\sf MeanInterBagSep}(\mc{B}, \ell_2) & = & \frac{2}{|\mc{B}|}\sum_{B \in \mc{B}}\|B\| \nonumber \\
& + & \frac{2}{|\mc{B}|(|\mc{B}|-1)}\left[\|\sum_{B \in \mc{B}}\mu(B)\|_2^2 - \sum_{B \in \mc{B}}\|\mu(B)\|_2^2\right] \nonumber
\end{eqnarray}
\end{lemma}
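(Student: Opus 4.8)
The plan is to reduce both identities to the closed form for pairwise squared-Euclidean bag separation established in Lemma~\ref{lemma:ell_2_case}, namely ${\sf BagSep}(B, B', \ell_2^2) = \|B\| + \|B'\| - 2\langle \mu(B), \mu(B')\rangle$, and then carry out the appropriate averaging over bags. Both statements are purely algebraic consequences of that lemma together with the definitions of ${\sf MeanIntraBagSep}$, ${\sf InterBagSep}$ and ${\sf MeanInterBagSep}$; here $\ell_2$ is understood to be the squared Euclidean distance $\ell_2^2$ used throughout this subsection.

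First, for the intra-bag identity I would set $B' = B$ in Lemma~\ref{lemma:ell_2_case}. Since ${\sf IntraBagSep}(B, \ell_2^2) = {\sf BagSep}(B, B, \ell_2^2)$ and $\langle \mu(B), \mu(B)\rangle = \|\mu(B)\|_2^2$, this gives $\|B\| + \|B\| - 2\|\mu(B)\|_2^2 = 2[\|B\| - \|\mu(B)\|_2^2]$ directly. For the inter-bag identity, I would unfold the definitions so that ${\sf MeanInterBagSep}(\mc{B}, \ell_2^2) = \frac{1}{|\mc{B}|(|\mc{B}|-1)}\sum_{B}\sum_{B' \neq B}{\sf BagSep}(B, B', \ell_2^2)$, an average over ordered pairs of distinct bags. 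Substituting the closed form, the two norm-terms $\|B\|$ and $\|B'\|$ each sum to $(|\mc{B}|-1)\sum_{B}\|B\|$ (by symmetry under relabeling of the summation indices), and after division the $(|\mc{B}|-1)$ factors cancel to produce the $\frac{2}{|\mc{B}|}\sum_B\|B\|$ term. The remaining cross-term I would handle with the complete-the-square identity $\sum_{B}\sum_{B' \neq B}\langle\mu(B),\mu(B')\rangle = \left\|\sum_{B}\mu(B)\right\|_2^2 - \sum_{B}\|\mu(B)\|_2^2$, obtained by adding and subtracting the diagonal contributions of the full double sum $\left\langle\sum_B \mu(B), \sum_{B'}\mu(B')\right\rangle$. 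Collecting terms with the $\frac{1}{|\mc{B}|(|\mc{B}|-1)}$ weight then yields the second identity.

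There is no substantive obstacle here; the result is an algebraic corollary of Lemma~\ref{lemma:ell_2_case}. The only point requiring genuine care is the sign-and-normalization bookkeeping of the cross-term: it enters with the coefficient $-2$ from Lemma~\ref{lemma:ell_2_case}, so I would verify carefully how the sign of the $\left\|\sum_B \mu(B)\right\|_2^2 - \sum_B \|\mu(B)\|_2^2$ contribution propagates through the averaging, and confirm that the norm-terms acquire weight $\tfrac{2}{|\mc{B}|}$ while the cross-term retains the full $\tfrac{2}{|\mc{B}|(|\mc{B}|-1)}$ factor. That sign check is the step most prone to a slip, and is where I would spend my attention when turning this plan into a full proof.
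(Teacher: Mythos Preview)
Your proposal is correct and follows essentially the same route as the paper: both substitute the closed form from Lemma~\ref{lemma:ell_2_case} into the double sum $\frac{1}{|\mc{B}|(|\mc{B}|-1)}\sum_{B}\sum_{B'\neq B}{\sf BagSep}(B,B',\ell_2^2)$, collapse the $\|B\|+\|B'\|$ terms by symmetry, and rewrite the off-diagonal inner-product sum via the complete-the-square identity. Your caution about the sign of the cross-term is well placed: the paper's own derivation in fact produces $-\tfrac{2}{|\mc{B}|(|\mc{B}|-1)}\big[\|\sum_B\mu(B)\|_2^2-\sum_B\|\mu(B)\|_2^2\big]$, so the ``$+$'' in the displayed statement appears to be a typo that your bookkeeping would catch.
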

\begin{proof}
First part is trivial from Lemma \ref{lemma:ell_2_case}. If $\mc{B} = \{B_i | i \in [m]\}$,
\begin{eqnarray}
& & {\sf MeanInterBagSep}(\mc{B}, \ell_2) = \frac{1}{m(m-1)}\underset{i \not= j}{\sum_{i=1}^{m}\sum_{j=1}^{m}}{\sf BagSep}(B_i, B_j, \ell_2) \nonumber \\
& = & \frac{1}{m(m-1)}\underset{i \not= j}{\sum_{i=1}^{m}\sum_{j=1}^{m}}(\|B_i\| + \|B_j\| - \langle \mu(B_i), \mu(B_j) \rangle) \nonumber \\
& = & \frac{2}{m}\sum_{i=1}^{m}\|B_i\| \nonumber \\
& - & \frac{2}{m(m-1)}\left[\sum_{i=1}^{m}\sum_{j=1}^{m}\langle \mu(B_i), \mu(B_j) \rangle - \sum_{i=1}^{m}\|\mu(B_i)\|_2^2 \right] \nonumber \\
& = & \frac{2}{m}\sum_{i=1}^{m}\|B_i\| - \frac{2}{m(m-1)}\left[\|\sum_{i=1}^{m}\mu(B_i)\|_2^2 - \sum_{i=1}^{m}\|\mu(B_i)\|_2^2 \right] \nonumber
\end{eqnarray}
\end{proof}
Algorithm \ref{alg:two} is used to compute the Bag Separation for squared euclidean distance.\\
\begin{algorithm}
\caption{Compute Bag Separation with squared euclidean distance}\label{alg:two}
\KwData{Set of bags $\mathcal{B}$}
\KwResult{${\sf MeanIntraBagSep}(\mc{B}, \ell_2^2), {\sf MeanInterBagSep}(\mc{B}, \ell_2^2)$}
${\sf MeanIntraBagSep} \gets 0$\\
${\sf MeanInterBagSep} \gets 0$\\
${\sf AvgSqNorm} \gets [0]_{|\mathcal{B}|}$\\
${\sf BagMeans} \gets [0]_{|\mathcal{B}| x n}$\\
${\sf SumofAvgSqNorm} \gets 0$\\
${\sf SumofBagMeans} \gets [0]_{1 x n}$\\
${\sf SumofBagMeansNorms} \gets 0$\\
\For{$B \in \mathcal{B}$}{
    \For{$i \in B$}{
        ${\sf AvgSqNorm}(B) \gets {\sf AvgSqNorm}(B) + \|x^{(i)}\|_2^2$\\
        ${\sf BagMeans}(B) \gets {\sf BagMeans}(B) + x^{(i)}$\\
    }
    ${\sf AvgSqNorm}(B) \gets {\sf AvgSqNorm}(B)/|B|$\\
    ${\sf BagMeans}(B) \gets {\sf BagMeans}(B)/|B|$\\
}
\For{$B \in \mathcal{B}$}{
    ${\sf MeanIntraBagSep} \gets {\sf MeanIntraBagSep} + 2[{\sf AvgSqNorm}(B) - \|{\sf BagMeans}(B)\|_2^2]$\\
    ${\sf SumofAvgSqNorm} \gets {\sf SumofAvgSqNorm} + {\sf AvgSqNorm}(B)$\\
    ${\sf SumofBagMeans} \gets {\sf SumofBagMeans} + {\sf BagMeans(B)}$\\
    ${\sf SumofBagMeansNorms} \gets {\sf SumofBagMeansNorms} + \|{\sf BagMeans}(B)\|_2^2$\\
}
${\sf MeanIntraBagSep} \gets {\sf MeanIntraBagSep}/|\mc{B}|$\\
${\sf MeanInterBagSep} \gets \frac{2}{|\mc{B}|}{\sf SumofAvgSqNorms} - \frac{2}{|\mc{B}|(|\mc{B}|-1)}[\|{\sf SumofBagMeans}\|_2^2 - {\sf SumofBagMeansNorms}]$
\end{algorithm}
\FloatBarrier
\begin{theorem}
Assuming the Bags to be disjoint, the running time of Algorithm \ref{alg:two} is $O(mn + |\mathcal{B}|n + |\mathcal{B}|)$ where $m$ is the number of examples and n is the dimension of the input space.
\end{theorem}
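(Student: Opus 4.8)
The plan is to prove the bound by a direct accounting of the work performed in each line of Algorithm~\ref{alg:two}, using the single structural fact that the bags are disjoint and partition the dataset, so that $\sum_{B\in\mathcal{B}}|B| = m$. This is the observation that collapses the nested loops over bags and their members into a linear pass over the $m$ instances, and it is really the only nontrivial input to the argument; everything else is bookkeeping that separates the $O(n)$-cost vector operations from the $O(1)$-cost scalar operations, which is precisely what produces the three distinct terms in the stated bound. Note that correctness of the output values follows from Lemma~\ref{lemma:ell_2_case} and the subsequent lemma, so here I would only track running time.

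First I would analyze the first \textbf{for} loop. Its inner double loop visits each instance $x^{(i)}$ exactly once (by disjointness), and at each visit computes a squared norm $\|x^{(i)}\|_2^2$ and performs a vector addition into ${\sf BagMeans}(B)$, each costing $O(n)$. Summing over all instances gives $O\!\left(n\sum_{B}|B|\right) = O(mn)$. The two normalization steps following the inner loop are executed once per bag; the division of the $n$-vector ${\sf BagMeans}(B)$ costs $O(n)$, so they contribute $O(|\mathcal{B}|n)$ in total.

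Next I would bound the second \textbf{for} loop. Each of its $|\mathcal{B}|$ iterations computes $\|{\sf BagMeans}(B)\|_2^2$ and accumulates the $n$-vector ${\sf SumofBagMeans}$, both $O(n)$, together with a constant number of scalar accumulations that are $O(1)$. Hence this loop costs $O(|\mathcal{B}|n)$ for the vector work plus $O(|\mathcal{B}|)$ for the scalar work. The concluding assignment computing ${\sf MeanInterBagSep}$ evaluates one norm of an $n$-vector and is thus $O(n)$, while the array initializations at the top cost $O(|\mathcal{B}|n)$, dominated by the $|\mathcal{B}|\times n$ array ${\sf BagMeans}$.

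Finally I would sum the contributions: $O(mn)$ from the first inner loop, $O(|\mathcal{B}|n)$ from the normalizations, the second loop, and the initialization, and $O(|\mathcal{B}|)$ from the per-bag scalar updates, yielding the claimed $O(mn + |\mathcal{B}|n + |\mathcal{B}|)$. There is no genuine obstacle here, since the argument is a line-by-line cost tally; the only care needed is to keep the vector-valued and scalar-valued accumulators separate so that the $|\mathcal{B}|$ term is accounted for honestly rather than silently absorbed into $|\mathcal{B}|n$.
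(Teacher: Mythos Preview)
Your proposal is correct and follows the same line-by-line cost accounting as the paper, just in far greater detail: the paper's proof is the single line $\sum_{B \in \mathcal{B}}|B|\,n + \sum_{B \in \mathcal{B}}(1+n) = mn + |\mathcal{B}|\,n + |\mathcal{B}|$, which your analysis unpacks term by term (and in fact accounts for additional pieces such as the array initializations and the final norm computation that the paper's one-liner leaves implicit).
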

\begin{proof}
Runtime = $\sum_{B \in \mathcal{B}}|B|n + \sum_{B \in \mathcal{B}}(1+n) = mn + |\mathcal{B}|n + |\mathcal{B}|$
\end{proof}
\subsection{Adversarial Example of Bags with Ratio of Mean Inter to Intra Bag Separation as 1/2}
\label{sec:example}
Consider $X = \{x^{(1)}, x^{(2)}, x^{(3)}\}$ which lie on a straight line. The distances are as follows:
\begin{itemize}
    \item $d(x^{(1)}, x^{(2)}) = d_1$
    \item $d(x^{(2)}, x^{(3)}) = d_2$
    \item $d(x^{(1)}, x^{(3)}) = d_1 + d_2$
\end{itemize}
We have two bags $B_1 = \{x^{(1)}, x^{(3)}\}$ and $B_2 = \{x^{(2)}\}$. The Intra-bag separations for both of them are as follows:
\begin{itemize}
    \item ${\sf BagSep}(B_1, B_1, d) = \frac{1}{2^2}(d(x^{(1)}, x^{(1)}) + d(x^{(1)}, x^{(3)}) + d(x^{(3)}, x^{(1)}) + d(x^{(3)}, x^{(3)})) = \frac{1}{2}(d_1 + d_2)$
    \item ${\sf BagSep}(B_2, B_2, d) = 0$
\end{itemize}
Hence, ${\sf MeanIntraBagSep}(\mc{B}, d) = \frac{1}{4}(d_1 + d_2)$. Now, the bag separation between the bags is as follows: 
\begin{itemize}
    \item ${\sf BagSep}(B_1, B_2, d) = \frac{1}{1 \times 2}(d(x^{(1)}, x^{(2)}) + d(x^{(3)}, x^{(2)})) = \frac{1}{2}(d_1 + d_2)$
    \item ${\sf InterBagSep}(B_1, d) = \frac{1}{2-1}({\sf BagSep}(B_1, B_2, d)) = \frac{1}{2}(d_1 + d_2)$
    \item ${\sf InterBagSep}(B_2, d) = \frac{1}{2-1}({\sf BagSep}(B_2, B_1, d)) = \frac{1}{2}(d_1 + d_2)$
\end{itemize}
Hence, ${\sf MeanInterBagSep}(\mc{B}, d) = \frac{1}{2}(d_1 + d_2)$.\\ Hence, ${\sf MeanInterBagSep}(\mc{B}, d)/{\sf MeanIntraBagSep}(\mc{B}, d) = 1/2$
\FloatBarrier

\section{Additional Details of Experimental Setup}\label{sec:additional-experiment-details}
We begin with additional details on the different baselines evaluated in our experiments.

${\bf DLLP}$: For a bag $B$, the  ${\sf DLLP\text{-}BCE}$ loss is given by ${\sf bce}(y_B/|B|, \hat{y}_B/|B|)$ where $y_B$ and $\hat{y}_B$ are the given and predicted label sums of bag $B$ where ${\sf bce}$ is the binary cross-entropy, that of ${\sf DLLP\text{-}MSE}$ is $(y_B - \hat{y}_B)^2$, and that of ${\sf DLLP\text{-}MAE}$ is $|y_B - \hat{y}_B|$. The minibatch loss is the sum of the per-bag losses over the $8$ bags in the minibatch. \\
${\bf GenBags}$: We divide the $8$ bags in a minibatch into $2$ blocks of $4$ bags each. For each block $60$ iid values of $\mb{w} = (w_1,\dots, w_4)$ are sampled from $N(\mb{0}, \bm{\Sigma})$ where $\bm{\Sigma}$ is the inner product matrix of the directions of the corners of the tetrahedron centered at origin. In particular, the diagonal entries of $\bm{\Sigma}$ are $1$ and all the off-diagonal entries are $-1/3$. This is a solution to the SDP of \cite{SRR} for the case of the $4$ bags in a minibatch being iid random, and we utilize this in our implementation. For each of the $60$ samples of $\mb{w}$ we create a generalized bag with those weights. In total we have $120$ generalized bags derived from a minibatch of $8$ bags.\\
${\bf Easy\text{-}LLP}$: We directly implement the soft-surrogate label loss given in Defn. 3.4 of  \cite{easy-llp} which is instantiated using the BCE loss at the instance-level. \\
${\bf OT}$ methods: There are pseudo-labeling techniques based on optimal transport proposed in \cite{OT}. We first train our model to convergence using ${\bf DLLP\text{-}BCE}$. Then we begin pseudo-labeling by constructing an OT problem described in Eqn 7 of \cite{OT}. We implement this without Entropic Regularization which we call ${\bf OT\text{-}LLP}$. We also implement it with Entropic Regularization. We call these ${\bf Hard\text{-}OT\text{-}LLP}$ and ${\bf Soft\text{-}OT\text{-}LLP}$ based on whether we use hard or soft pseudo-labels. \\
${\bf SIM\text{-}LLP}$: In this method proposed by \cite{KotziasDFS15}, the bag-level DLLP loss is augmented with a pairwise similarity based loss penalizing different predictions of geometrically close feature-vectors (Eqn 3 in \cite{KotziasDFS15}). Since the similarity based loss has number of terms which is square in the number of feature-vectors in a minibatch, we sample a random set of $400$ feature-vectors from each minbatch to apply this loss. \\
${\bf Mean\text{-}Map}$: The optimization given in Algorithm 1 of \cite{QuadriantoSCL09} is implemented in two steps. The quantities $\hat{\mu}_{XY}$ therein are first computed and then the computation for $\hat{\theta}^*$ is implemented using a minibatch optimization along the same lines as the above methods.

\section{Instance-level Model Training Results}
\label{appendix:instance_level_training}
\begin{flushleft}
We perform instance-level training of our model on Criteo CTR Dataset for comparison. We perform a train-test spilt of 80:20 on the dataset. We then train using instance level mini-batch gradient descent for the same number of epochs, using the same optimizer, model, learning rate schedule and the instance-level variant of the loss function. We obtain an AUC score of 80.1 with BCE loss and an AUC score of 79.94 with MSE loss.

We also perform instance-level training of our model on Criteo SSCL Dataset performing the same train-test spilt. We obtain an MSE of 147 after training with MSE loss.
\end{flushleft}

\section{Baseline Training Results}
\label{appendix:baseline_results}
\begin{flushleft}
Table \ref{tab:auc_baselines} reports the AUC scores of all the baselines on our bags created using Criteo CTR Dataset. We take the best test AUC score during each training configuration. The mean and standard deviation over 5 splits has been reported. Table \ref{tab:feat_bags_cssl_mse} reports the best MSE scores of all the baselines on our bags created using Criteo SSCL Dataset. Again, the mean and standard deviation over 5 splits has been reported.
\end{flushleft}

\section{Bag creation and filtering Statistics}
\label{appendix:bag_stats}
\begin{flushleft}
The statistics of datasets created as described in Section \ref{sec:dataset_creation} before and after clipping for all $349$ datasets are in Table \ref{tab:bag_stats}. We report the number of bags created, number of bags retained after clipping, percentage of instances left after clipping and the mean and standard deviation of bag size in each dataset. The datasets which are emboldened pass our filter and are used for training.
\end{flushleft}
\subsection{${\bf CumuBagSizeDist}$ for LLP-Bench datasets}
\label{appendix:bag_size_dist}
\begin{flushleft}
Table \ref{tab:bag_size_clusters} contains the threshold bags sizes such that $t\%$ of the bags have at most that size, for $t = 50,70,85,95$ for LLP-Bench datasets formed using Criteo CTR Dataset. The \textit{Tail size} cluster to which each dataset is assigned is also listed. Table \ref{tab:bag_size_percentile_criteo_sscl} contains the same for LLP-Bench datasets formed using Criteo SSCL Dataset.
\end{flushleft}
\subsection{${\bf  LabelPropStdev}$ and \textit{Label Variation} clusters for LLP-Bench datasets}
\label{appendix:std_label_prop_dist}
\begin{flushleft}
Table \ref{tab:label_prop_std} contains the ${\sf LabelPropStddev}$ for all LLP-Bench datasets formed using Criteo CTR Dataset. The \textit{Label Variation} cluster to which each dataset is assigned is also listed. Table \ref{tab:label_prop_std_SSCL} does the same for LLP-Bench datasets formed using Criteo SSCL Dataset.
\end{flushleft}
\subsection{${\sf InterIntraRatio}$ and \textit{Bag Separation} clusters for all datasets}
\label{appendix:bag_sep_stats}
\begin{flushleft}
Table \ref{tab:bag_sep_and_clusters} contains ${\sf MeanInterBagSep}$, ${\sf MeanIntraBagSep}$ and ${\sf InterIntraRatio}$ for all datasets in LLP-Bench formed using Criteo CTR Dataset. It also contains the information of the \textit{Bag Separation} cluster to which each of these datasets are assigned. Table \ref{tab:bagsep_criteo_sscl} does the same for LLP-Bench datasets formed using Criteo SSCL Dataset.
\end{flushleft}
\section{Training results on Fixed size Feature-bags Datasets}
\label{appendix:feature_random_train}
\begin{flushleft}
We also create and train our model on \textit{fixed size feature bags}. To create these datasets, we first perform a 5-fold split of the Criteo CTR Dataset. Next, for each group key $\mc{C}$ corresponding to an LLP-Bench dataset, we construct a random ordering of the train set with the constraint that feature vectors with same values of attributes in $\mc{C}$ lie in a contiguous segment. We then assign contiguous segments of size $k$ to the same bag to create \textit{fixed size feature bags} for $k \in \{64, 128, 256, 512\}$. We train our ${\sf DLLP}, {\sf GenBags}, {\sf Easy\text{-}LLP}, {\sf OT\text{-}LLP}, {\sf SIM\text{-}LLP}$ and ${\sf Mean\text{-}Map}$ baselines on these \textit{fixed size feature bags} and report the mean and std of test AUC scores in Table \ref{tab:feature_rand_auc}. For each group key $\mc{C}$, Table \ref{tab:feature_rand_auc} contains $4$ contiguous rows, one for bag size $\{64, 128, 256, 512\}$ each in ascending order.
\end{flushleft}
\section{Training results on Random Bags}
\label{appendix:random_bags}
\begin{flushleft}
Table \ref{tab:rand_feat_bags} is replicated along with the standard deviation of AUC scores across 5 splits in Table \ref{tab:rand_feat_bags_std}.
\end{flushleft}
\section{Feature Bag Datasets with Group Key Size $3$}\label{appendix:grp_key_size_3}
We also calculate the number of additional datasets which would have been created had we also considered group keys of size $3$. Using $instance_{thresh}\%$ as $30\%$, we retain $1195$ of ${26 \choose 3}$ datasets. It would be intractable to handle so many datasets and we believe that the current benchmark provides sufficient diversity.

\section{Analysis of label proportions of large bags}\label{appendix:large_bag_label_prop}
Large bags with label proportions close to $0$ or $1$ contain a considerable amount of information. In this section we calculate the percentage of instances which lie in large bags (bags with size greater than $high_{thresh} = 2500$) with skewed label proportions. We calculate this for each feature bag dataset. Since we throw these bags out of our dataset, if the percentage of such instances is low then we do not lose much information. We say that the label proportion of a bag is skewed if either it is less than $\eps$ or greater than $1 - \eps$. \\
Table \ref{tab:large_bag} reports these percentages for $\eps=0.1$ and $\eps=0.05$ respectively. It can be seen that the maximum percentages over all datasets with $\eps=0.1$ and $\eps=0.05$ are $6.66\%$ and $1.42\%$ respectively. This shows that there are relatively low number of such datapoints and they can be dropped so that neural network training is tractable.

\section{Dataset diversity analysis}\label{appendix:dataset_diversity}
Figure \ref{fig:all_three_scatter} shows a scatter plot with the three metrics, ${\sf MeanBagSize}$, ${\sf LabelPropStddev}$ and ${\sf InterIntraRatio}$ on its axes. Each point represents one of the datasets in our benchmark. Figures \ref{fig:meanbagsize_stddevlabelprop}, \ref{fig:stddevlabelprop_interintraratio} and \ref{fig:interintraratio_meanbagsize} shows the projections of Figure \ref{fig:all_three_scatter} on ${\sf MeanBagSize}$ vs ${\sf LabelPropStddev}$ plane, ${\sf LabelPropStddev}$ vs ${\sf InterIntraRatio}$ plane and ${\sf InterIntraRatio}$ vs ${\sf MeanBagSize}$ plane respectively. The diversity of LLP-Bench is more apparent from these scatter plots as diversity afforded due to combination of these metrics can be visualized.

For better readability and ease of reference we replicate (magnified versions of) Figures \ref{fig:datasets_vs_bag_metrics} and \ref{fig:datasets_performance_vs_bag_metrics} as Figures \ref{fig:datasets_vs_bag_metrics_bigger} and \ref{fig:datasets_performance_vs_bag_metrics_bigger} respectively.

\section{Cramer's V between grouping-key and label for LLP-Bench Datasets}\label{app:Cramers}
It is important to see how each grouping-key pair $(A, B)$ of LLP-Bench feature-bag datasets is correlated with the label. If the correlation is high, then most bags of the dataset corresponding to $(A, B)$ will have label proportions close to $0$ or $1$. On the other hand, bags have mixed labels if the correlation is low. Since both, the labels and grouping-key $(A, B)$ are categorical, we compute Cramer's V between them as follows. Given two categorical features $(X, Y)$ such that $X \in \{1, \dots, r\}$ and $Y \in \{1, \dots, c\}$, let $N$ be the total number of data points. Let $O_{i, j}$ be the total number of times $X$ takes the value $i$ and $Y$ takes the value $j$ for $(i, j) \in \{1, \dots, r\} \times \{1, \dots, c\}$. We define the expected occurence of this event assuming independence of $X$ and $Y$ as $E_{i, j} = Np_iq_j$ where $p_i = \sum_{j=1}^c O_{i, j}/N$ and $q_j = \sum_{i=1}^r O_{i, j}/N$. We define ${\sf Cramer's V}$ as follows.
\begin{align}
    {\sf Cramer's\, V}&  := \sqrt{(\chi^2/N)/\min(r-1, c-1)} \\ \nonumber
    & \text{where} \quad \chi^2 = \sum_{i=1}^r\sum_{j=1}^c\frac{(O_{i, j} - E_{i, j})^2}{E_{i, j}} \\ \nonumber
\end{align}
For our case, we use the pairs $(A, B)$ as $X$ and the label as $Y$. Thus, $r$ will be the number of bags in that dataset and $c = 2$ since it is a binary classification dataset. $O_{i, 0}$ and $O_{i, 1}$ will be the number of instances in the $i^{th}$ bag labeled $0$ and $1$ respectively. $N$ will be the total number of instances in the dataset. We report the $\chi^2$ values, total number of instances and ${\sf Cramer's \, V}$ for all LLP-Bench datasets in Table \ref{tab:cramers_v}. We observe a minimum ${\sf Cramer's\, V}$ of $0.22$ and a maximum ${\sf Cramer's\, V}$ of $0.39$. The ${\sf Cramer's\, V}$ is concentrated towards the maximium. We see sufficient diversity in LLP-Bench dataset with respect to ${\sf Cramer's\, V}$.

\section{Criteo SSCL Column Names}\label{app:criteo_sscl_column_encoding}
Table \ref{tab:criteo_sscl_column_encoding} has the names of the columns of the Criteo SSCL dataset.

\begin{figure}[htb]
    \centering
    \begin{subfigure}[b]{0.6\linewidth}
        \includegraphics[width=\linewidth]{final_scatter_mean_bag_size.png}
        \caption{${\sf MeanBagSize}$}
        \label{fig:mean-bag-size_bigger}
    \end{subfigure}
    \begin{subfigure}[b]{0.6\linewidth}
        \includegraphics[width=\linewidth]{final_scatter_std_label_prop.png}
        \caption{${\sf LabelPropStdev}$}
        \label{fig:label-prop-stdev_bigger}
    \end{subfigure}
    \begin{subfigure}[b]{0.6\linewidth}
        \includegraphics[width=\linewidth]{final_scatter_ratio_of_means.png}
        \caption{${\sf InterIntraRatio}$}
        \label{fig:inter-intra-mean_ratio_bigger}
    \end{subfigure}
    \caption{Datasets vs. bag-level metrics: $y$-axis has the metric, $x$-axis has the datsets. Replication of Figure \ref{fig:datasets_vs_bag_metrics}.
    }
    \label{fig:datasets_vs_bag_metrics_bigger}
\end{figure}
\begin{figure}[htb]
    \centering
    \begin{subfigure}[b]{0.6\linewidth}
        \includegraphics[width=\linewidth]{final_auc_ordered_by_mean_bag_size.png}
        \caption{${\sf MeanBagSize}$}
        \label{fig:mean-bag-size-2_bigger}
    \end{subfigure}
    \begin{subfigure}[b]{0.6\linewidth}
        \includegraphics[width=\linewidth]{final_auc_ordered_by_std_label_prop.png}
        \caption{${\sf LabelPropStdev}$}
        \label{fig:label-prop-stdev-2_bigger}
    \end{subfigure}
    \begin{subfigure}[b]{0.6\linewidth}
        \includegraphics[width=\linewidth]{final_auc_ordered_by_ratio_of_means.png}
        \caption{${\sf InterIntraRatio}$}
        \label{fig:inter-intra-mean_ratio-2_bigger}
    \end{subfigure}
    \caption{Datasets performance: AUC scores on the y-axis, $x$-axis has the datasets ordered according to increasing metric. Replication of Figure \ref{fig:datasets_performance_vs_bag_metrics}.
    }
    \label{fig:datasets_performance_vs_bag_metrics_bigger}
\end{figure}

\begin{table}[]
\scriptsize
\centering
\caption{Percentage of large bags with label proportion less than $\eps$ or greater than $1 - \eps$.}
\label{tab:large_bag}
\begin{tabular}{llcc}\toprule
\textbf{Col1} & \textbf{Col2} & \textbf{$\eps$ = 0.1} & \textbf{$\eps$ = 0.05} \\ \midrule
\textit{C1} & \textit{C7} & 4.18 & 0.64 \\
\textit{C1} & \textit{C10} & 0.56 & 0.09 \\
\textit{C2} & \textit{C7} & 4.05 & 1.16 \\
\textit{C2} & \textit{C10} & 3.19 & 0.13 \\
\textit{C2} & \textit{C11} & 3.7 & 1.11 \\
\textit{C2} & \textit{C13} & 4.15 & 1.06 \\
\textit{C3} & \textit{C7} & 2.12 & 0.51 \\
\textit{C3} & \textit{C10} & 1.88 & 0.34 \\
\textit{C3} & \textit{C11} & 2.23 & 0.63 \\
\textit{C3} & \textit{C13} & 2.3 & 0.57 \\
\textit{C4} & \textit{C7} & 1.45 & 0.62 \\
\textit{C4} & \textit{C10} & 2.35 & 0.3 \\
\textit{C4} & \textit{C11} & 1.63 & 0.61 \\
\textit{C4} & \textit{C13} & 2.0 & 0.61 \\
\textit{C4} & \textit{C15} & 6.66 & 1.42 \\
\textit{C6} & \textit{C7} & 5.77 & 0.85 \\
\textit{C6} & \textit{C10} & 0.79 & 0.1 \\
\textit{C7} & \textit{C8} & 4.83 & 0.74 \\
\textit{C7} & \textit{C10} & 6.28 & 1.06 \\
\textit{C7} & \textit{C12} & 1.96 & 0.5 \\
\textit{C7} & \textit{C14} & 6.23 & 1.07 \\
\textit{C7} & \textit{C15} & 1.51 & 0.72 \\
\textit{C7} & \textit{C16} & 1.5 & 0.5 \\
\textit{C7} & \textit{C18} & 2.54 & 0.92 \\
\textit{C7} & \textit{C20} & 5.64 & 1.07 \\
\textit{C7} & \textit{C21} & 1.78 & 0.51 \\
\textit{C7} & \textit{C24} & 2.01 & 0.79 \\
\textit{C7} & \textit{C26} & 3.49 & 0.72 \\
\textit{C10} & \textit{C12} & 1.92 & 0.34 \\
\textit{C10} & \textit{C14} & 1.31 & 0.44 \\
\textit{C10} & \textit{C15} & 3.87 & 0.79 \\
\textit{C10} & \textit{C16} & 1.96 & 0.33 \\
\textit{C10} & \textit{C17} & 3.45 & 0.13 \\
\textit{C10} & \textit{C18} & 3.64 & 0.53 \\
\textit{C10} & \textit{C20} & 0.7 & 0.11 \\
\textit{C10} & \textit{C21} & 1.91 & 0.33 \\
\textit{C10} & \textit{C24} & 2.63 & 0.3 \\
\textit{C10} & \textit{C26} & 2.22 & 0.26 \\
\textit{C11} & \textit{C12} & 2.08 & 0.59 \\
\textit{C11} & \textit{C15} & 1.82 & 0.7 \\
\textit{C11} & \textit{C16} & 1.69 & 0.57 \\
\textit{C11} & \textit{C18} & 2.49 & 0.92 \\
\textit{C11} & \textit{C21} & 1.84 & 0.6 \\
\textit{C11} & \textit{C24} & 2.16 & 0.76 \\
\textit{C11} & \textit{C26} & 3.35 & 0.59 \\
\textit{C12} & \textit{C13} & 2.21 & 0.57 \\
\textit{C13} & \textit{C15} & 2.28 & 0.7 \\
\textit{C13} & \textit{C16} & 1.99 & 0.61 \\
\textit{C13} & \textit{C18} & 3.07 & 0.95 \\
\textit{C13} & \textit{C21} & 2.11 & 0.57 \\
\textit{C13} & \textit{C24} & 2.46 & 0.72 \\
\textit{C13} & \textit{C26} & 3.28 & 0.51 \\ \bottomrule
\end{tabular}%
\end{table}

\FloatBarrier
\begin{figure}
     \centering
     \begin{subfigure}[b]{0.45\textwidth}
         \centering
         \includegraphics[width=\textwidth]{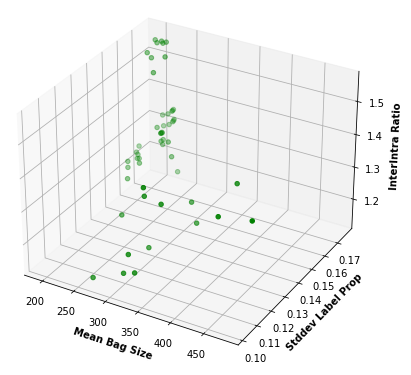}
         \caption{${\sf MeanBagSize}$ vs ${\sf LabelPropStddev}$ vs ${\sf InterIntraRatio}$}
         \label{fig:all_three_scatter}
     \end{subfigure}
     \hfill
     \begin{subfigure}[b]{0.45\textwidth}
         \centering
         \includegraphics[width=\textwidth]{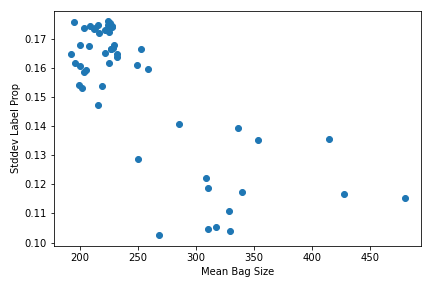}
         \caption{${\sf MeanBagSize}$ vs ${\sf LabelPropStddev}$}
         \label{fig:meanbagsize_stddevlabelprop}
     \end{subfigure}
     \hfill
     \begin{subfigure}[b]{0.45\textwidth}
         \centering
         \includegraphics[width=\textwidth]{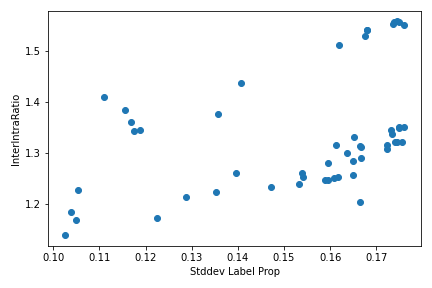}
         \caption{${\sf LabelPropStddev}$ vs ${\sf InterIntraRatio}$}
         \label{fig:stddevlabelprop_interintraratio}
     \end{subfigure}
     \begin{subfigure}[b]{0.45\textwidth}
         \centering
         \includegraphics[width=\textwidth]{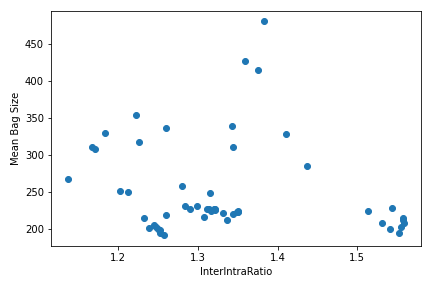}
         \caption{${\sf InterIntraRatio}$ vs ${\sf MeanBagSize}$}
         \label{fig:interintraratio_meanbagsize}
     \end{subfigure}
     \vspace{5pt}
        \caption{Scatter plots for ${\sf MeanBagSize}$ vs ${\sf LabelPropStddev}$ vs ${\sf InterIntraRatio}$ and it's projections for Criteo CTR Dataset}
        \label{fig:scatter_plots_metrics}
\end{figure}

\begin{figure}
     \centering
     \begin{subfigure}[b]{0.45\textwidth}
         \centering
         \includegraphics[width=\textwidth]{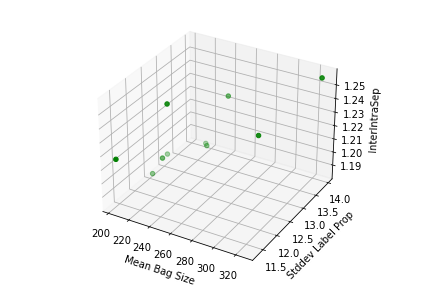}
         \caption{${\sf MeanBagSize}$ vs ${\sf LabelPropStddev}$ vs ${\sf InterIntraRatio}$}
         \label{fig:all_three_scatter_criteo_ssl}
     \end{subfigure}
     \hfill
     \begin{subfigure}[b]{0.45\textwidth}
         \centering
         \includegraphics[width=\textwidth]{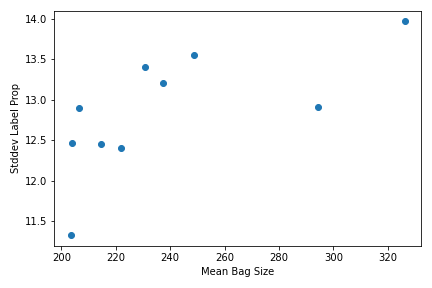}
         \caption{${\sf MeanBagSize}$ vs ${\sf LabelPropStddev}$}
         \label{fig:meanbagsize_stddevlabelprop_criteo_ssl}
     \end{subfigure}
     \hfill
     \begin{subfigure}[b]{0.45\textwidth}
         \centering
         \includegraphics[width=\textwidth]{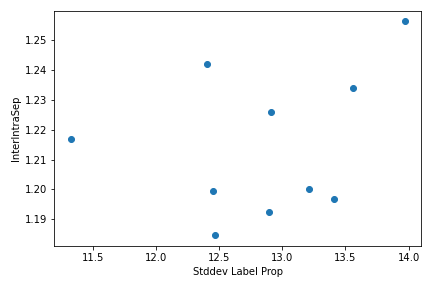}
         \caption{${\sf LabelPropStddev}$ vs ${\sf InterIntraRatio}$}
         \label{fig:stddevlabelprop_interintraratio_criteo_ssl}
     \end{subfigure}
     \begin{subfigure}[b]{0.45\textwidth}
         \centering
         \includegraphics[width=\textwidth]{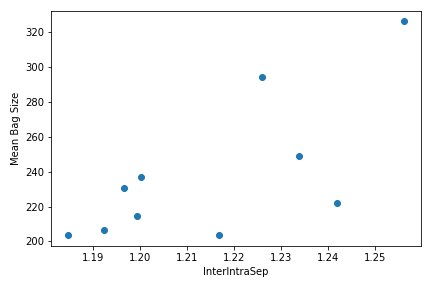}
         \caption{${\sf InterIntraRatio}$ vs ${\sf MeanBagSize}$}
         \label{fig:interintraratio_meanbagsize_criteo_ssl}
     \end{subfigure}
     \vspace{5pt}
        \caption{Scatter plots for ${\sf MeanBagSize}$ vs ${\sf LabelPropStddev}$ vs ${\sf InterIntraRatio}$ and it's projections for Criteo SSL Dataset}
        \label{fig:scatter_plots_metrics_criteo_ssl}
\end{figure}

\begin{table}[]
\scriptsize
\centering
\caption{Cramer's V between (Col1, Col2) and the label for each Dataset created with Criteo CTR.}
\label{tab:cramers_v}
%

\end{table}

\end{document}